\algnewcommand\algorithmicinput{\textbf{Input:}}
\algnewcommand\Input{\item[\algorithmicinput]}
\algnewcommand\algorithmicoutput{\textbf{Output:}}
\algnewcommand\Output{\item[\algorithmicoutput]}
\algnewcommand\algorithmicbreak{\textbf{break}}
\algnewcommand\Break{\item[\algorithmicbreak]}
\newenvironment{subtheorem}[1]{%
	\def\subtheoremcounter{#1}%
	\refstepcounter{#1}%
	\protected@edef\theparentnumber{\csname the#1\endcsname}%
	\setcounter{parentnumber}{\value{#1}}%
	\setcounter{#1}{0}%
	\expandafter\def\csname the#1\endcsname{\theparentnumber.\Alph{#1}}%
	\ignorespaces
}{%
	\setcounter{\subtheoremcounter}{\value{parentnumber}}%
	\ignorespacesafterend
}
\newcounter{parentnumber}
\newtheorem{thm}{Theorem}
\newtheorem{remark}{Remark}
\newtheorem{lemma}{Lemma}
\newtheorem{corollary}{Corollary}
\newtheorem{definition}{Definition}
\renewcommand{\bm}{\mathbf}
\newcommand{\sig}{\bm s}
\newcommand{\errv}{\boldsymbol \xi}
\newcommand{\err}{ \xi}
\newcommand{\indset}{S}
\newcommand{\sign}{\operatorname{sign}}
\newcommand{\Trunc}{\operatorname{Trunc}}
\newcommand{\Approx}{\operatorname{Approx}}
\newcommand{\E}{\mathbb{E}}
\newcommand{\Nor}{\mathcal N}
\renewcommand{\u}{\bm u}
\newcommand{\V}{\bm V}
\renewcommand{\v}{\bm v}
\newcommand{\muv}{\boldsymbol \mu}
\newcommand{\mumin}{\mu_{\min}}
\newcommand{\mumax}{\mu_{\max}}
\newcommand{\R}{\mathbb{R}}
\newcommand{\x}{\bm{x}}
\newcommand{\y}{\bm{y}}
\newcommand{\Z}{Z}
\newcommand{\s}{\sig}
\newcommand{\w}{\bm{w}}
\newcommand{\tc}{t_c}
\newcommand{\tm}{t_m}
\newcommand{\htm}{\hat{t}_m}
\newcommand{\Mc}{M_c}
\newcommand{\Meff}{\Mc}
\newcommand{\Od}{\mathcal{O}_{\mbox{\tiny diag}}}
\newcommand{\SdK}{\mathcal{S}_{d,K}}
\newcommand{\Obb}{\mathcal{O}_{\mbox{\tiny BB}}}
\newcommand{\Osim}{\mathcal{O}_{\mbox{\tiny sim}}}
\newcommand{\Rbb}{R_{\mbox{\tiny BB}}}
\newcommand{\Rsim}{R_{\mbox{\tiny sim}}}
\newcommand{\oracle}{_{\mbox{\tiny oracle}}}
\newcommand{\MKL}{M_{K,L}}
\newcommand{\pmin}{p_{\min}}
\newcommand{\vmin}{\v_{\min}}
\newcommand{\ThM}{\eta}
\newcommand{\sigeff}{\sigma_{\operatorname{eff}}}
\newcommand{\vp}{\sigma}
\newcommand{\subvar}{\tilde{\sigma}}
\newcommand{\subG}{\operatorname{subG}}
\newcommand\blfootnote[1]{%
	\begingroup
	\renewcommand\thefootnote{}\footnote{#1}%
	\addtocounter{footnote}{-1}%
	\endgroup
}
\begin{document}

\title{Distributed Sparse Normal Means Estimation with Sublinear Communication}

\author{{
		Chen Amiraz\thanks{Corresponding author: chen.amiraz@weizmann.ac.il},  Robert Krauthgamer and Boaz Nadler},\\[2pt]
	Department of Computer Science, Weizmann Institute of Science, Rehovot, Israel\\
}

\maketitle

\begin{abstract}
        {We consider the problem of sparse normal means estimation in a distributed setting with communication constraints. We assume there are $M$ machines, each holding $d$-dimensional observations of a $K$-sparse vector $\muv$ corrupted by additive Gaussian noise.
        {The $M$ machines are connected in a star topology to a fusion center, whose goal is to estimate the vector $\muv$ with a low communication budget.}
        Previous works have shown that to achieve the centralized minimax rate for the $\ell_2$ risk, the total communication must be high -- at least linear in the dimension \(d\). This phenomenon occurs, however, at very weak signals. {We show that at signal-to-noise ratios (SNRs) that are sufficiently high -- but not enough for recovery by any individual machine -- the support of $\muv$ can be correctly recovered with significantly less communication.         }
        {Specifically, we present two algorithms for distributed estimation of a sparse mean vector corrupted by either Gaussian or sub-Gaussian noise. We then prove that above certain SNR thresholds, with high probability, these algorithms recover the correct support with total communication that is {\it sublinear} in the dimension $d$.} Furthermore, the communication decreases {\it exponentially} as a function of signal strength. If in addition $KM\ll \tfrac{d}{\log d}$, then with an additional round of sublinear communication, our algorithms achieve the centralized rate for the $\ell_2$ risk.
        Finally, we present simulations that illustrate the performance of our algorithms in different parameter regimes.
}

{\textbf{\textit{Keywords:}} Distributed statistical inference, sparse normal mean estimation, sublinear communication, support recovery.}
\blfootnote{{This article has been accepted for publication in Information and Inference: A Journal of the IMA, Published by Oxford University Press.}}

\end{abstract}

\section{Introduction}
In the past couple of decades, the steady increase in data collection capabilities has lead to rapid growth in the size of datasets. 
In many applications, the collected datasets cannot be stored or analyzed on a single machine, which has sparked the development of distributed approaches for machine learning, statistical analysis, and data mining. A few examples of this vast body of work are \citep{mcdonald2009efficient,bekkerman2011scaling,duchi2012dual,guha2012large}.

One of the most popular distributed settings is known as one-shot, embarrassingly parallel 
or split-and-merge.
In this setting, there are $M$ machines, each holding an independent set of samples from some unknown distribution, connected in a star topology to a central node, also called a fusion center or simply the center.
The task of the fusion center is to estimate $\theta$, a parameter of the distribution, using {little} communication with the $M$ machines. 
	In one-shot schemes, there is only a single round of communication. 
	The fusion center may send a setup message to the machines (or a subset of them). 
Then, each contacted machine performs a local computation and sends its result back to the center.  
Finally, the fusion center forms a global estimator $\hat\theta$ based on these messages.
A clear advantage of such one-shot schemes is their simplicity and ease
of implementation.

Statistical inference in a distributed setting, in particular under communication constraints, raises several
fundamental theoretical and practical questions. One question is what is the loss {in} statistical accuracy incurred by distributed schemes, compared to a centralized setting, whereby a single machine has access to all of the samples. 
Various works proposed multi-round communication-efficient schemes and analyzed their accuracy, see for example \citep{shamir2014communication,zhang2015disco,wang2017efficient,jordan2019communication}.
In the context of one-shot schemes, several works analyzed the case where the fusion center
simply averages the estimators computed by the individual machines or for robustness, takes their median
\citep{zhang2013communication,rosenblatt2016optimality,minsker2019distributed}.
In a high dimensional setting where the parameter of interest is a-priori known to be sparse, \citet{lee2017communication} and \citet{battey2018distributed} considered a variant where the averaged estimator
is further thresholded at the fusion center. 
A key {finding} in many of these papers is that in various scenarios and under suitable regularity assumptions, the $\ell_2$ risk of the distributed estimate {attains the same convergence rate} as the centralized one, provided that the data is not split across too many machines.

Another important theoretical aspect in distributed learning is fundamental lower bounds on the achievable accuracy under communication as well as memory constraints, regardless of any specific inference scheme, see e.g. \citep{zhang2013information,garg2014communication,steinhardt2016memory,cai2020distributed,zhu2018distributed,szabo2020adaptive,acharya2020general},
and similarly for the closely related problem of distributed detection \citep{acharya2020distributed,szabo2020optimal}.
Lower bounds on the estimation accuracy were also studied for problems involving a sparse quantity,
including sparse linear regression, correlation detection and more \citep{steinhardt2015minimax,braverman2016communication,dagan2018detecting,han2018geometric}.
A central {finding} in these works is that 
to achieve the centralized minimax rate for the $\ell_2$ risk, the communication must scale at least linearly in the ambient dimension. 

{However,} when the task is to estimate a sparse quantity, {then intuitively} the communication should increase 
linearly with its sparsity level, and only logarithmically with the ambient dimension. 
Indeed, in the context of supervised learning,
\citet{acharya2019distributed} showed that in various linear models with a sparse vector, 
optimal {prediction} error rates are achievable with total communication logarithmic in the dimension. 
However, {they consider connectivity topology of a chain where each machine $i$ sends a message only to machine $i+1$, and thus} their algorithm is sequential and not compatible with one-shot inference schemes.
An interesting question is the following: can problems that involve a sparsity prior admit one-shot algorithms with communication that is \textit{sublinear} in the ambient dimension? 

We consider sparse normal means estimation, which is one of the simplest and most well-studied inference problems with sparsity priors, but in a distributed setting {of \(M\) machines connected in a star topology to a fusion center.
For simplicity we assume that each machine has the same number $n$ of i.i.d. samples of the form $\x_i=\muv + \errv_i$, where the mean vector $\muv\in \R^d$ is exactly $K$-sparse and the noise is Gaussian, $\errv_i\sim\Nor (\bm{0},\sigma^2 \bm{I}_d)$. 
The assumption of Gaussian noise implies that the empirical mean is a sufficient statistic.
Thus, in our analysis we may equivalently assume that each machine has only one independent observation of $\muv$ with an effective noise level of $\sigeff = \tfrac{\sigma}{\sqrt{n}}$.
In addition, we assume for simplicity that the noise level $\sigma $ is known. Hence, without loss of generality, 
 we assume that the single observation at each machine has noise level $\sigeff=1$.}

{We consider a one-shot communication scheme where the fusion center sends a setup message to each of the machines (or a subset of them), and then each contacted machine sends back its message to the center. 
	We emphasize that in our setting the machines communicate only with the center and not with each other. 
Note that if the machines have prior knowledge of all problem parameters, then setup messages are not required. 
However, in any case the communication of this setup stage is often negligible.
The goal of the center is to recover the support of $\muv$ under the constraint that the total communication between the fusion center and the machines (including the setup stage) is bounded by a budget of $B\ll d$ bits. 
As we discuss in Section \ref{sec:ell_2}, if $KM\ll \tfrac{d}{\log d}$ then achieving this goal implies that the vector $\muv$ itself can be estimated with small $\ell_2$ risk using communication sublinear in $d$.}

For this sparse normal means problem, \citet{braverman2016communication} and \citet{han2018geometric} derived communication lower bounds for the $\ell_2$ risk of any estimator,
and proved that to achieve the minimax rate, the total
communication must be at least $\Omega(d)$. 
{
	\citet{shamir2014fundamental} derived lower bounds for several other distributed
	problems involving $M$ machines, each allowed to send a message of length at most $b$ bits. His work implies that there exist $d$-dimensional distributions whose mean is a $1$-sparse vector
   of sufficiently low magnitude,	such that with 
	$n=O(d\log d)$ samples per machine, any scheme with communication sublinear in $d$
	has only an $o(1)$ probability of exact support recovery.}
These works paint a pessimistic view, that to achieve the performance of the centralized solution, distributed inference must incur high communication costs.

{In contrast, our main contribution is to show that at SNRs that are sufficiently high, but not high enough for recovery by any individual machine, the support of $\muv$ can be exactly recovered with total communication {\it sublinear} in the dimension $d$. 
Specifically, we present and analyze the performance of two distributed schemes.
Our analysis is non-asymptotic, but the setting we have in mind is of a sparse vector in high dimension, namely $d\gg 1$ and $K\ll d$. 
Assuming that $\mumin$, a lower bound on the non-zero entries of $|\muv|$, is known to the center and exceeds $\Omega\left( \tfrac{ \log\log d }{\sqrt{\log d}} \right) $, and that the number of machines $M$ is sufficiently high, we prove the following results.
First, with high probability, our two schemes recover the support of $\muv$ with total communication \textit{sublinear} in $d$. 
Second, since the center need not contact all $M$ machines, the communication costs of our proposed schemes decrease \textit{exponentially} as $\mumin$ increases towards $\sqrt{2\log d}$, at which point the support of $\muv$ may be found by a single machine using $O(K\log d)$ communication bits. 
Third, we present the following 
counter-intuitive behavior of our algorithm: \textit {more machines enable less communication}.
Specifically, as discussed after Theorem \ref{thm:th_M}, for some range of the problem parameters, as the number of machines is increased, exact support recovery is possible with \textit{less} total communication.  
We further extend some of these results to the case of sub-Gaussian additive noise.
Finally, we prove that if $KM\ll \tfrac{d}{\log d}$, then an additional single round of communication, also sublinear in $d$, results in an estimator for $\muv$ that achieves the centralized rate for $\ell_2$ risk. }

{This idealized setting allows for a relatively simple analysis that showcases a tradeoff between the number of machines, SNR, and communication.
	Four remarks are in place. 
	First, it remains an open problem whether the SNR-communication tradeoff of our algorithms is optimal.
	Indeed, the derivation of tight SNR-dependent communication lower bounds for the sparse normal means problem is an interesting topic for future research.
	Second, we focus on the simple case where all machines have the same number of samples $n$ and all samples have the same noise level $\sigma$. An interesting direction for future research is to consider a more general setting where each machine $i$ has a different number of samples $n_i$, or a different noise level $\sigma_i$. Another interesting setting is where each machine observes different sparse vectors $\muv_i$ with the same support $S$ (or very similar supports $S_i$). Note that there is no single SNR parameter in these cases since different machines have different effective SNRs.
	Third, the estimator in our idealized setting is linear and thus unbiased. This avoids the added complication of analyzing the bias-variance tradeoff in a distributed setting.  
	Lastly, building on the insights gained in this simple setting, we believe a similar behavior should hold for other popular statistical learning problems involving estimation of a sparse quantity in a high dimensional setting. 
}

\paragraph{Paper organization.}
In Section \ref{sec:problem_setup} {we characterize the SNR regime relevant to the distributed sparse normal means problem.}
{Section \ref{sec:algs} presents several algorithms for exact support recovery, for either Gaussian or sub-Gaussian additive noise.
	We then prove that under suitable assumptions on the SNR and on the number of machines, our algorithms achieve exact support recovery with high probability using sublinear total communication in the ambient dimension $d$.}
Section \ref{sec:ell_2} discusses the relation between exactly recovering the support of a vector and estimating it with small $\ell_2$ risk, and shows a reduction from the latter to the former with one additional round of sublinear communication.
Section \ref{sec:prev} elaborates on how our results relate to the lower bounds of \citet{braverman2016communication}, \citet{han2018geometric} and \citet{shamir2014fundamental}. Section \ref{sec:sim} presents simulations that illustrate our results. All proofs can be found in the appendix.

\paragraph{Notation.}
{We use the standard $O(\cdot),\Omega(\cdot),\Theta(\cdot)$ notation to hide constants independent of the problem parameters and the notation $\tilde{O}(\cdot)$ to hide terms that are at most polylogarithmic in $d$. 
	For functions $f,g$ the notations $f=o(g)$ and $f\ll g$ imply that $f/g\to 0$ as $d\to\infty$.
	The term exact recovery of the support $\indset$ with high probability means that an estimator $\hat{\indset}$ correctly estimates the support, i.e., $\Pr\left[ \hat{\indset}=\indset\right] \to 1$ as $d\to\infty$ and the number of machines $M=M(d)$ tends to infinity at a suitable rate, as detailed in each theorem.
	We use the notation $\lceil x \rceil$ for the smallest integer larger than or equal to $x$.}

\section{SNR regime}\label{sec:problem_setup}
{As mentioned in the introduction, we assume each of $M$ machines has $n$ samples corrupted by additive Gaussian noise of known noise level. Hence, without loss of generality we assume that each machine $i$ stores a single observation $\x_i=\muv+\errv_i$, where $\errv_i\sim\Nor\left( \bm{0},\bm{I}_d\right) $ and $\muv$ is exactly $K$-sparse.}
For simplicity we assume that the sparsity level $K$ is known to the fusion center and that $\mu_j\geq 0$ for all $j\in[d]$. However, with slight variations our methods can work when $K$ is unknown or for vectors $\muv$ that have both positive and negative entries.
We further assume a lower bound $\mumin$ on its smallest non-zero coordinate, namely $\mu_j \geq \mumin$ for all $j\in\indset=\{i\,|\mu_i > 0\}$. 
It will be convenient to use the natural scaling 
\begin{equation}
\mumin = \sqrt{2 r \log \left( d-K\right) }.
        \label{eq:mu_min_r}
\end{equation}
{We focus on the following question: Given a lower bound on the signal-to-noise ratio (SNR) $r$, how much communication is sufficient for \textit{exact recovery of the support $\indset$} of a $K$-sparse vector $\muv$ with high probability?}

Let us first discuss what is the interesting regime for the SNR parameter $r$.
Recall that for $d \gg K$, the maximum of $d-K$ i.i.d. standard Gaussian random variables is tightly concentrated around $\sqrt{2 \log \left( d-K\right)}$. At a high SNR $r>1$, each individual machine can thus exactly recover the support set $\indset$ with high probability. Hence, it suffices that only one machine sends $O(K\log d)$ bits to the fusion center. At the other extreme, let $r<\frac{c}{M}$ for a fixed $0<c<1$.
Here, even in a centralized setting, exact support recovery with high probability is not possible. To see this, note that the empirical mean of all samples is a sufficient statistic, and {its} effective SNR is $c<1$. Therefore, with probability tending to $1$ as $d\to\infty$, {its} smallest support entry is smaller than its largest non-support entry. If the index of $\mumin$ is chosen uniformly at random, then any algorithm would fail to recover the support.
{Hence, the relevant SNR values are}
\begin{equation} \label{eq:SNR_regime}
\frac1M <r<1.
\end{equation}
In this range, a single machine cannot individually recover the support with high probability. 
Yet, as we show next, for a large {subrange of the SNR values given in Eq.} \eqref{eq:SNR_regime}, exact support recovery by the fusion center is possible with total communication $o(d)$ bits.  Furthermore, as $r$ increases towards $1$, the total communication decays exponentially fast to $O\left( K \log^{1+c} d\right) $ for an appropriate constant $c>0$.

\section{Distributed algorithms for the sparse normal means problem}\label{sec:algs}

We present two one-shot algorithms for the distributed sparse normal means problem and derive non-asymptotic bounds on their performance, {namely, their probability of exact recovery and their total communication.} For both algorithms, the lower bound $r$ on the SNR is assumed to be known to the center and is used to decide how many machines to communicate with and what messages to send them. {We use the notation $\Mc$ for the number of contacted machines, which is different in each theorem.}
For our analysis below, we assume the total number of machines is sufficiently large, in particular $M\geq \Mc$, {which is a stronger condition than the centralized lower bound $M>1/r$.}

In our first algorithm, denoted Top-$L$, the center sends a parameter $L$ to $\Mc$ machines. Each contacted machine $i$ sends back a message $\y_i$ with the indices of the $L$ highest coordinates of its sample $\x_i$.
Our second algorithm is threshold-based; the center sends a threshold $\tm$ to $\Mc$ machines, and each contacted machine $i$ sends back all indices $j$ with $x_{i,j} > \tm$. 
In {both} two algorithms, the center then estimates the support of $\muv$ by a voting procedure. 
{We prove in Theorems \ref{thm:top} and \ref{thm:th} that under suitable assumptions, and in particular for a sufficiently 
high SNR, both algorithms achieve exact support recovery with high probability using sublinear communication.
In particular, we show in Theorem \ref{thm:th_logd} that if $r>\Omega\left( \log^{-1} \left( d-K\right) \right)$, then with high probability 
the thresholding algorithm with $\Mc=O(\log d)$ machines and $\tm=\sqrt{2r\log (d-K)}$ recovers the support of the $K$-sparse vector $\muv$ using  
$\tilde{O}\left(\left( d-K\right)^{1-r}+K\right)$
communication bits in expectation. 
The total communication cost is sublinear in $d$ provided that $K \ll d$ and $r>\Omega\left(\frac{\log \log d}{\log d} \right) $. 
Moreover, increasing the threshold allows for a tradeoff between $M_c$ and the expected message length per machine. As we show in Theorems \ref{thm:th_M} and \ref{thm:th_large_M}, perhaps counter-intuitively, given more than $O\left( \log d\right) $ machines, the fusion center can recover the support using \textit{less} total communication, by setting a higher threshold.
Specifically, if $r>\Omega\left( \log^{-2} \left( d-K\right) \right)$, then with high probability the thresholding algorithm with $\Mc=\tilde{O}\left(\left(d-K\right)^{\left(1-\sqrt{r}\right)^{2}}  \right)$ machines and
$\tm=\sqrt{2 \log \left( d-K\right)}$ recovers the support of the $K$-sparse vector $\muv$ using $\tilde{O}\left( K \left(d-K\right)^{\left(1-\sqrt{r}\right)^{2}} \right) $ communication bits. Note that the resulting total communication cost is sublinear in $d$, provided that $K$ is at most polylogarithmic in $d$ and 
$r>\Omega\left( \frac{\log^2\log d}{\log^2 d}\right)   $. 
We also prove a similar result for the Top-$L$ algorithm with $L=K$ in Theorem \ref{thm:top}.
Finally, in Section \ref{sec:sub-G} we extend some of these results to the case of additive sub-Gaussian noise.
}

\begin{figure}[t]
	\centering
	\includegraphics[width=\linewidth]{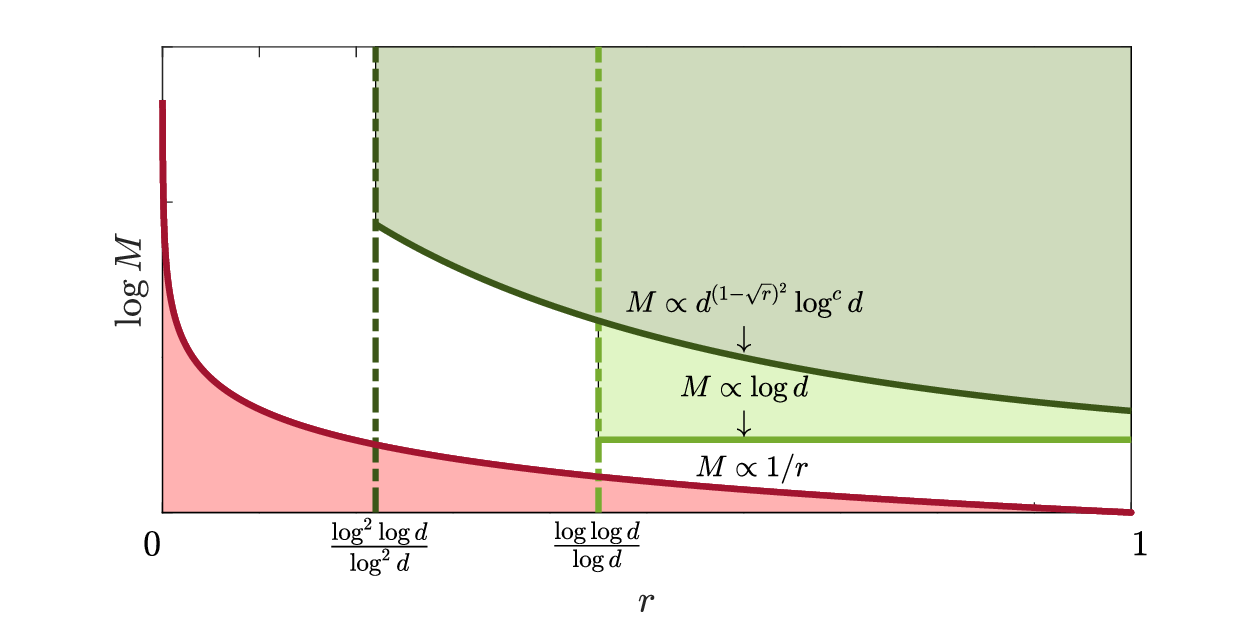}
	\caption{Illustration of communication regimes as a function of $\left(r,M \right)$, where $M$ is on a logarithmic scale. 
		{In the pink area, exact support recovery is information-theoretically impossible even in the centralized setting.
			In the green areas, our distributed algorithms achieve exact support recovery with communication that is sublinear in $d$.}
}
	\label{fig:M_vs_r}
\end{figure}

To put our results in context, we illustrate in Figure \ref{fig:M_vs_r} the different communication regimes as a function of the SNR $r$ and the number of machines $M$ for $K=1$.
As discussed above, if $r< \frac{1}{M}$, then even with infinite communication, exact support recovery with high probability is information-theoretically impossible. The corresponding $\left(r,M \right) $ values are in the pink area below the red curve {which delineate the relation $r\cdot M =1$. }
{
By our Theorems \ref{thm:top} and \ref{thm:th}, exact recovery with sublinear communication is possible in the light green and dark green areas.}
In the white area, {distributed} exact support recovery is possible using communication that is at least linear in $d$. An example of a recovery scheme in this range is to send the entire sample (up to a quantization error). 
It remains an open question whether exact support recovery with sublinear communication is possible for  $\left(r,M \right) $ values in the white area.

\subsection{Top-$L$ Algorithm}
In the Top-$L$ algorithm, the center uses its knowledge of the parameters $d,M,r,K$ to determine the number of machines $\Mc$ to contact, and sends them a parameter $L\in \mathbb{N}$. The $i$-th contacted machine then sends a message $\y_i$ consisting of the $L$ indices with the largest coordinates of its vector $\x_i$. Given the messages $\y_1,\dots,\y_{\Mc}$, the fusion center counts how many votes each index received and estimates the support to be the $K$ indices with the highest number of votes. Voting ties can be broken arbitrarily.
This scheme is outlined in Algorithm \ref{alg:topL}. Its total communication cost is $B=O(L\Mc\log d)$ bits.

\begin{remark}\label{rem:unknownK}
	The above description assumes that the fusion center knows the sparsity level $K$. However the following simple variant can handle a case where only an upper bound $K_{\max}\geq K$ is known. In this case, the number of contacted machines $\Mc$ is determined using $K_{\max}$ instead of $K$, and each contacted machine sends its top $L\geq K_{\max}$ indices to the fusion center.
	The center then estimates the support as the set of indices that received more votes than a suitable threshold {$\tc\left( d\right) $ (see Eq. \eqref{eq:tc}). }
\end{remark}

\begin{algorithm}[h]
        \caption{Top-$L$}
        \label{alg:topL}
        \vspace{6pt}
        \noindent\textbf{At the fusion center:}\\
        \textbf{Input} dimension $d$, number of machines $M$, SNR $r$, sparsity level $K$, parameter $L$\\
        \textbf{Output} setup message $\s$ 
        \begin{algorithmic}[1]     
        	\State  {if $L=K=1$, then $\Mc $ is given by Eq. \eqref{eq:M_0}, otherwise it is given by Eq. \eqref{eq:M_K_L} }
        	\State  {send message $\s$ that contains the value $L$ to each of the first $\Mc$ machines} \vspace{6pt}
        \end{algorithmic}
        \textbf{At each machine $i=1,\dots,\Meff$:}\\
        \textbf{Input} 	setup message $\s$, sample $\x_i$\\
        \textbf{Output} message $\y_i$ to center
        \begin{algorithmic}[1]     
                \State  {compute permutation $\alpha_i$ that sorts the vector $\x_{i}$ in descending order, $x_{i,\alpha_i\left( 1\right) }\geq \dots \geq x_{i,\alpha_i\left( d\right) }$}
                \State  send to the center the $L$ indices with the largest coordinates, $\y_i=\left\lbrace  \alpha_i\left(1\right) ,\dots ,\alpha_i\left(L\right)\right\rbrace   $ \vspace{6pt}
        \end{algorithmic}
        \textbf{At the fusion center:}\\
        \textbf{Input} messages $\y_1,\dots,\y_{\Meff}$, sparsity level $K$\\
        \textbf{Output} estimated support $\hat{\indset}$
        \begin{algorithmic}[1]     
                \State  for each coordinate $j\in \left[ d\right] $, let the set of votes it received be $\V_j=\left\lbrace i\in \left[ \Mc\right] : j\in\y_i\right\rbrace $ {and let their number be $\v_j=\left| \V_{j}\right|$}
                \State  {compute permutation $\pi$ that sorts the indices by descending number of votes, $\v_{\pi(1)}  \geq \dots \geq \v_{\pi(d)} $ }
                \State return $\hat{\indset}=\left\lbrace \pi(1),\dots, \pi(K) \right\rbrace $
        \end{algorithmic}
\end{algorithm}

We prove that for sufficiently high SNR, the Top-$L$ algorithm recovers the exact support of $\muv$ with high probability.
{To ease the presentation and highlight the main ideas of the proof, we first analyze the case $L=K=1$ and then extend the analysis to general $L\geq K \geq 1$.} 
The proofs of the theorems stated below appear in Appendix \ref{sec:proof_thm_top}. 

{Motivated by the required number of machines for proving Theorem \ref{thm:top1},} we define the quantity
\begin{equation}\label{eq:M_0}
        M_{0}\left(d,r\right)=\left\lceil \max\left\{ 1,\frac{\sqrt{2\pi}e\left(2\left(1-\sqrt{r}\right)^{2}\log d+1\right)}{\left(1-\sqrt{r}\right)\sqrt{2\log d}}\cdot d^{\left(1-\sqrt{r}\right)^{2}}\right\}\cdot 8\log d\right\rceil.
\end{equation}
Notice that for any fixed SNR $r<1$, $M_{0}\left(d,r\right)$ is \textit{sublinear} in $d$, and up to polylogarithmic terms it is proportional to $d^{\left(1-\sqrt{r}\right)^{2}}<d$. 
The following theorem provides a support recovery guarantee in the setting $K=L=1$.
\begin{subtheorem}{thm}\label{thm:top}
        \begin{thm}\label{thm:top1}
                Assume $r<1$ and that $ M_0(d,r) \leq \min\left\lbrace  M,d \right\rbrace $. Then, if the center contacts $\Meff=M_0$ machines, 
                the Top-$1$ algorithm recovers the support of a $1$-sparse vector $\muv$ with probability at least $1-d^{-1}-e^{3}d^{-3}$. Its total communication is $O(M_0\log d)$ bits. 
        \end{thm}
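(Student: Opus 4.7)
The plan is to analyze the voting process at the fusion center directly. Let $j^*$ denote the unique support index, and let $V_j$ be the number of contacted machines that vote for coordinate $j$. The algorithm succeeds exactly when $V_{j^*}>V_j$ for every $j\neq j^*$. Let $p=P(j^*=\arg\max_j \x_{i,j})$; by rotational symmetry of the isotropic Gaussian noise on the non-support coordinates, every $j\neq j^*$ is voted for with the same probability $q=(1-p)/(d-1)\leq 1/(d-1)$, so $V_{j^*}\sim\mathrm{Bin}(\Mc,p)$ and each $V_j$ for $j\neq j^*$ is marginally $\mathrm{Bin}(\Mc,q)$.

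First I would derive a lower bound on $p$ of the form $p\gtrsim d^{-(1-\sqrt r)^2}/((1-\sqrt r)\sqrt{\log d})$. The cleanest route is to write
\[
p \;\geq\; P(\x_{i,j^*}>\tau)\cdot P\bigl(\textstyle\max_{j\neq j^*}\x_{i,j}<\tau\bigr)
\]
for the threshold $\tau=\sqrt{2\log(d-1)}$, then apply a Mill's ratio lower bound to $P(Z>(1-\sqrt r)\sqrt{2\log(d-1)})$ for the first factor, and bound the second factor below by a constant using that the maximum of $d-1$ standard Gaussians does not exceed $\tau$ with probability bounded away from zero. The constants in the definition \eqref{eq:M_0} of $M_0$ are chosen precisely so that with $\Mc=M_0$ this gives $\Mc p\geq 8\log d$, and a multiplicative Chernoff bound with deviation factor $1/2$ then yields
\[
P\bigl(V_{j^*}\leq 4\log d\bigr)\;\leq\;\exp(-\Mc p/8)\;\leq\;d^{-1}.
\]

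Next I would control $\max_{j\neq j^*}V_j$. Because $\Mc q\leq \Mc/(d-1)=\tilde O\bigl(d^{(1-\sqrt r)^2-1}\bigr)=o(1)$, the binomial is essentially Poisson with a tiny rate. Using $\binom{\Mc}{t}\leq(e\Mc/t)^t$, for any fixed $j\neq j^*$ the tail
\[
P(V_j\geq 4\log d)\;\leq\;\Bigl(\frac{e\Mc q}{4\log d}\Bigr)^{4\log d}
\]
decays faster than any polynomial in $d$, and a union bound over the $d-1$ non-support coordinates gives $P(\max_{j\neq j^*}V_j\geq 4\log d)\leq e^{3}d^{-3}$. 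Intersecting with the Chernoff event, with probability at least $1-d^{-1}-e^{3}d^{-3}$ we have $V_{j^*}>4\log d\geq\max_{j\neq j^*}V_j$, so $\hat\Omega=\{j^*\}=\Omega$. Each of the $\Mc=M_0$ contacted machines sends a single index in $[d]$, and the setup message is a single integer $L=1$ plus $O(\log d)$ overhead bits, so the total communication is $O(M_0\log d)$ bits.

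The main obstacle, and the reason for the somewhat intricate form of $M_0(d,r)$ in \eqref{eq:M_0}, is the simultaneous matching of constants: the Mill's ratio lower bound on $p$ must be sharp enough that $\Mc p$ clears the $8\log d$ barrier used by Chernoff, while $\Mc q$ must at the same time be small enough that raising $e\Mc q/(4\log d)$ to the $4\log d$-th power still beats $d^{-4}$ by a factor that absorbs the $d-1$ union bound. The factors $\sqrt{2\pi}\,e$ and $(2(1-\sqrt r)^{2}\log d+1)/((1-\sqrt r)\sqrt{2\log d})$ in \eqref{eq:M_0} are exactly what is needed to close both inequalities with the stated $d^{-1}$ and $e^{3}d^{-3}$ failure probabilities.
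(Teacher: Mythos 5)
Your proposal is correct and follows essentially the same route as the paper's proof: the same lower bound on the success probability $p$ via conditioning on the event that the maximum of the non-support noise stays below $\sqrt{2\log d}$ (a constant-probability event) combined with the Mill's-ratio lower bound, the same use of the $8\log d$ factor in $M_0$ to clear a multiplicative Chernoff lower-tail bound for the support votes at level $4\log d$, and the same symmetry argument $q\leq 1/(d-1)$ with a union bound over non-support coordinates. The only cosmetic difference is that you bound the non-support vote tails with the binomial-coefficient estimate $\binom{\Mc}{t}q^{t}\leq(e\Mc q/t)^{t}$ rather than the paper's Chernoff upper bound, which yields the same $e^{3}d^{-3}$-type conclusion in the regime permitted by the assumption $M_0\leq d$.
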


Several insights follow from Theorem \ref{thm:top1}. 
First, recall that for any $r<1$ no machine can successfully recover the support of $\muv$ on its own. 
Yet, for $d\gg 1$ and for any fixed $r<1$, as implied by the theorem, the fusion center can recover the support of $\muv$ by communicating with only $M_0\left( d,r\right) $ machines, receiving from each machine {its own mostly inaccurate} estimate of the support.
Second, as the SNR lower bound $r$ increases towards $1$, the algorithm needs {to contact} fewer machines and thus less communication to succeed with high probability. Moreover, by Eq. \eqref{eq:M_0}, $M_0\left( d,r\right) $ decreases \textit{exponentially} fast with $r$. 
Lastly, for a fixed $r$ the required number of machines $M_0\left( d,r\right) $ and thus the total communication cost both increase sublinearly with $d$.

Next, we consider the more general case where the unknown vector $\muv$ is exactly sparse with sparsity level at most $K$, and its support is estimated by the Top-$L$ algorithm with parameter $L\geq K$.
To this end, we define the auxiliary quantities
\begin{equation}\label{eq:a}
a=a\left(K,L,d \right) =\sqrt{2\log\frac{d-K}{L-K+1}},
\end{equation}
\begin{equation}\label{eq:b}
b=b\left(K,L,d,r \right)=a-\sqrt{2r\log\left(d-K\right)},
\end{equation}
and the quantity
\begin{equation}\label{eq:M_K_L}
	\MKL\left(d,r\right)=\left\lceil \max\left\{ 1,\frac{4\sqrt{2\pi}\left(b^{2}+1\right)}{b}\cdot\left(d-K\right)^{\left(\sqrt{1-\frac{\log\left(L-K+1\right)}{\log\left(d-K\right)}}-\sqrt{r}\right)^{2}}\right\} \cdot8\log d\right\rceil   .
\end{equation}

The following theorem provides a support recovery guarantee in this setting.
\begin{thm}\label{thm:topL}
        Assume $r<1$ and that $\MKL(d,r)\leq \min\left\lbrace M,\frac{d-K}{L} \right\rbrace    $. Then, if the center contacts $\Meff=\MKL$ machines, 
        the Top-$L$ algorithm with $K \leq L < (d-K)/2 $ recovers the support of a $K$-sparse vector $\muv$ with probability at least $1-Kd^{-1}-e^{3}d^{-3}$ using $O\left( L\MKL  \log d\right) $ communication bits. 
\end{thm}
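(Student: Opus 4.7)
The plan is to adapt the vote-counting analysis from the proof of Theorem~\ref{thm:top1} to the regime of arbitrary $K$-sparse vectors with Top-$L$ reports. For each coordinate $j \in [d]$, set $V_j = \#\{i \leq \Mc : j \in \y_i\}$. Since the $\Mc$ contacted machines have independent samples, the indicators $\mathbb{1}[j \in \y_i]$ are i.i.d.\ Bernoulli in $i$, so $V_j \sim \mathrm{Bin}(\Mc, P_j)$ with $P_j := \Pr(j \in \y_1)$. Algorithm~\ref{alg:topL} succeeds whenever $\min_{j \in \Omega} V_j > \max_{j \notin \Omega} V_j$, so the task reduces to lower-bounding $P_j$ on the support, upper-bounding it off the support, and then invoking concentration plus a union bound.

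For the support lower bound I would introduce two events: (a)~$\err_{i,j} \geq b$, forcing $\x_{i,j} \geq \mumin + b = a$; and (b)~at most $L-K$ of the $d-K$ non-support coordinates of $\x_i$ exceed $a$. These depend on disjoint noise variables, hence are independent. Under their intersection, even if all $K-1$ other support coordinates exceed $a$, at most $(K-1)+(L-K)=L-1$ coordinates beat $\x_{i,j}$, so $j$ lies in the top $L$. Event (a) has probability $\Phi(-b)$; for (b), the Gaussian tail bound $\Phi(-a) \leq \tfrac{1}{a\sqrt{2\pi}}e^{-a^2/2} = \tfrac{L-K+1}{(d-K)a\sqrt{2\pi}}$ combined with Markov's inequality (applied to the binomial count of non-support coordinates exceeding $a$) gives probability at least $1 - 1/(a\sqrt{2\pi})$. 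Hence $P_j \geq p_\Omega := \Phi(-b)\bigl(1 - 1/(a\sqrt{2\pi})\bigr)$ for every $j \in \Omega$. For the non-support upper bound, the stochastic-dominance observation that restricting to the $d-1-K$ non-support coordinates other than $j$ only decreases the $L$-th order statistic gives $P_j \leq \Pr(\err_{i,j} > \hat g_{(L)})$, where $\hat g_{(L)}$ is the $L$-th largest of those $d-1-K$ i.i.d.\ standard Gaussians; by symmetry among the $d-K$ i.i.d.\ standard Gaussians $\{\err_{i,j}\} \cup \{\text{others}\}$, this equals $L/(d-K) =: p_{\bar\Omega}$.

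To conclude I would combine these bounds with \eqref{eq:M_K_L} via the Gaussian tail lower bound $\Phi(-b) \geq \tfrac{b}{\sqrt{2\pi}(b^2+1)}e^{-b^2/2}$. The exponent $\bigl(\sqrt{1-\log(L-K+1)/\log(d-K)} - \sqrt{r}\bigr)^2$ in \eqref{eq:M_K_L} simplifies to $b^2/(2\log(d-K))$, so $(d-K)^{(\cdots)^2} = e^{b^2/2}$, yielding $\MKL \cdot \Phi(-b) \geq 32\log d$; together with $L < (d-K)/2$, which forces $a > \sqrt{2\log 2}$ and hence $1-1/(a\sqrt{2\pi}) \geq 1/2$, this gives $\Mc p_\Omega \geq 16\log d$. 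Simultaneously the hypothesis $\MKL \leq (d-K)/L$ ensures $\Mc p_{\bar\Omega} \leq 1$. Fixing a threshold $\tau$ strictly between $\Mc p_{\bar\Omega}$ and $\Mc p_\Omega$ (say $\tau = \Mc p_\Omega/2$), a multiplicative Chernoff bound yields $\Pr(V_j < \tau) \leq d^{-2}$ for each $j \in \Omega$, while a Poisson-tail Chernoff estimate exploiting $\Mc P_j \leq 1$ gives $\Pr(V_j \geq \tau) \leq e^{3} d^{-4}$ for each $j \notin \Omega$. A union bound over the $K$ support and $d-K$ non-support coordinates recovers the claimed failure bound $Kd^{-1} + e^{3}d^{-3}$, and the communication bound $O(\MKL(L+1)\log d)$ follows immediately from the message structure.

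The main delicate point is the support lower bound: a naive argument would demand that all $K$ support coordinates exceed $a$ simultaneously, but that event's probability collapses as $r \to 1$ and $b \to 0$. The decomposition above sidesteps this by conditioning only on the single noise variable $\err_{i,j}$ while controlling non-support interference uniformly through Markov's inequality. The off-support bound, though loose, is rendered harmless by the hypothesis $\MKL \leq (d-K)/L$, which automatically forces $\Mc p_{\bar\Omega} \leq 1$ and makes the Chernoff tail for non-support coordinates essentially free.
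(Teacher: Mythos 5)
Your proposal is correct and follows essentially the same route as the paper: the same two-event decomposition for the support probability (own noise exceeding $b$ so the coordinate passes $a$, together with at most $L-K$ non-support coordinates exceeding $a$), the same off-support bound $p_j\le L/(d-K)$ combined with $\MKL\le (d-K)/L$, and the same Chernoff-plus-union-bound voting analysis; your Markov-inequality step (giving constant $1/2$) and the exchangeability argument for non-support indices are minor, slightly cleaner variants of the paper's Chernoff bound (giving $1/4$) and its conditioning on the number of support coordinates sent. The only small omission is the case $b\le 0$ (possible when $L>K$ and $r$ is close to $1$), where the tail bound $\Phi(-b)\ge \tfrac{b}{\sqrt{2\pi}(b^2+1)}e^{-b^2/2}$ is vacuous; there one instead uses $\Phi(-b)\ge 1/2$ together with the $\max\{8,\cdot\}$ branch of \eqref{eq:M_K_L}, exactly as the paper does.
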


\end{subtheorem}
{While the expressions in Theorem \ref{thm:topL} are more involved than those of Theorem \ref{thm:top1}, similar insights to those mentioned above continue to hold.
In particular, the Top-$L$ algorithm with $L=K$ incurs a total communication cost of $O\left( K\cdot \left( d-K\right) ^{\left(1-\sqrt{r}\right)^{2}}\log^{2.5}d\right) $, which is sublinear in $d$ provided that $K$ is at most polylogarithmic in $d$ and $r>\Omega\left(\frac{\log^2\log d}{\log^2 d} \right) $.
}

\begin{remark}
	One can consider a variant of the algorithm that sends $L<K$ randomly selected indices out of the top-$K$. This randomized variant would allow a tradeoff between the number of contacted machines and the message length per machine.
\end{remark}

\subsection{Thresholding Algorithm}
{In our second algorithm,} the fusion center chooses a threshold $\tm=\tm\left( d,r,M,K\right)$ and sends (a truncated binary representation of) it to a subset of the machines $\Meff=\Meff\left( d,r,M,K\right) \leq M$. Each contacted machine $i$ sends back all indices $j$ such that $x_{i,j}>\tm$. Similarly to the Top-$L$ algorithm, given the messages $\y_1,\dots,\y_{\Mc}$ and the sparsity level $K$, the fusion center estimates the support as the $K$ indices with the highest number of votes. Voting ties can be broken arbitrarily. The scheme is outlined in Algorithm \ref{alg:th}.
If instead of the sparsity level $K$ only an upper bound on it $K_{\max}\geq K$ is known, and $K_{\max}\ll d$, then the fusion center can set $\tm$ and $\Meff$ by approximating $d-K \approx d$. In addition, the center estimates the support as outlined in Remark \ref{rem:unknownK}.

\begin{algorithm}[t!]
        \caption{Thresholding}
        \label{alg:th}
        \vspace{6pt}
        \noindent\textbf{At the fusion center:}\\
        \textbf{Input} 	dimension $d$, number of machines $M$, SNR $r$, sparsity level $K$\\
        \textbf{Output} setup message $\s$
        \begin{algorithmic}[1]     
        	\State  {depending on $M$, calculate $\Meff$ and $\tm$ by their expressions in Theorem \ref{thm:th_logd} or \ref{thm:th_M} or \ref{thm:th_large_M}}
        	\State  send {message} $\s=\Trunc(\tm,\left\lfloor\log_2 \tm \right\rfloor,\left\lceil \log_2 d \right\rceil)$ to each of the first $\Meff$ machines \vspace{6pt}
        \end{algorithmic}
        \textbf{At each machine $i=1,\dots,\Meff$:}\\
        \textbf{Input} 	setup message $\s$, sample $\x_i$\\
        \textbf{Output} message $\y_i$
        \begin{algorithmic}[1]     
        		\State	construct threshold $\htm=\Approx(\s,\left\lfloor\log_2 \tm \right\rfloor,\left\lceil \log_2 d \right\rceil)$
                \State 	let $\y_i=\left\lbrace j\in \left[ d\right] : x_{i,j}>\htm \right\rbrace $
                \State  send $\y_i$ to center \vspace{6pt}
        \end{algorithmic}
        \textbf{At the fusion center:}\\
        \textbf{Input} 	messages $\y_1,\dots,\y_{\Meff}$, sparsity level $K$\\
        \textbf{Output} estimated support $\hat{\indset}$
        \begin{algorithmic}[1]     
                \State  for each coordinate $j\in \left[ d\right] $, let the set of votes it received be $\V_j=\left\lbrace i\in \left[ \Mc\right] : j\in\y_i\right\rbrace $ {and let their number be $\v_j=\left| \V_{j}\right|$}
                \State  {compute permutation $\pi$ that sorts the indices by descending number of votes, $\v_{\pi(1)}  \geq \dots \geq \v_{\pi(d)} $}
                \State return $\hat{\indset}=\left\lbrace \pi(1),\dots, \pi(K) \right\rbrace $
        \end{algorithmic}
\end{algorithm}

The thresholding algorithm has several desirable properties. 
First, it is simple to implement in a distributed setting.
Second, in the centralized setting, thresholding algorithms were shown to be optimal in various aspects (see Section \ref{sec:ell_2} for further details).
Third, adjusting the threshold allows for a tradeoff between the number of contacted machines and the expected message length per machine.
{Notice that if the SNR is sufficiently high, but still $r<1$, i.e., not high enough for recovery by any individual machine, there may not even be a need to contact all machines to recover the support.} 
By the same logic, when the SNR is lower, one can lower the threshold. Of course, this would incur a higher communication cost.
Hence, since the fusion center knows both $r$ and $M$, it can set an optimal threshold $\tm$ and send it only to $\Meff\leq M$ machines, which ensures exact support recovery with high probability at minimal communication cost (among all possible thresholds).

{To complete the description of the algorithm, we now describe our approximation of a real number by a finite amount of bits.
	Recall that the scientific binary representation of a number $x\in\R$ consists of a bit representing its sign and bits $\left\lbrace b_j\right\rbrace_{j\in\mathbb{Z}} $, such that $|x|=\sum_{j=-\infty}^{\left\lfloor\log_2|x|\right\rfloor}b_{j} 2^j$.
	One can approximate $x$ by truncating its binary representation at a predetermined precision level.  
	Specifically, given two parameters $U,P\in\mathbb{N}$, let the procedure $\s=\Trunc(x,U,P)$ output a truncated binary representation of $x$ of length $U+P+2$ such that $\s=\left( \mathds{1}\left\lbrace x\geq 0\right\rbrace,b_{-P},\dots,b_{U} \right) $. Given $\s$, let the procedure $\hat{x}=\Approx(\s,U,P)$ construct an approximation for $x$, given by $\hat{x}= \sign(x)\cdot \sum_{j=-P}^{U}b_{j} 2^j$. 
	If $U\geq \left\lfloor\log_2|x|\right\rfloor$, then $\hat{x}$ and $x$ consist of the same bits up to the $P$-th bit after the binary dot, and thus the resulting approximation error is bounded by $|\hat{x}-x|<2^{-P}$.
	This scheme is a variant of \citet[Algorithm 1]{szabo2020adaptive}. }

{In our analysis we assume that $\mumax=\max_{j\in\indset}\mu_j$ is at most polynomial in $d$.
	Thus taking $U,P=O(\log d)$ ensures that with high probability all quantities of interest are approximated up to $d^{-O(1)}$ error.
	In addition, since $P,U$ only depend on $d$ and on the bound $\mumax$, they can be set in advance without communication.
}

{We analyze the performance of the thresholding algorithm in three regimes, in terms of the number of contacted machines $M_c$: small, intermediate, and large (clearly under the constraint that $M\geq M_c$).
For each regime, we derive a different threshold $\tm=\tm\left( d,r,M\right) $, where the SNR parameter $r$ and sparsity level $K$ are assumed to be known.
In the small $M_c$ regime, considered in Theorem \ref{thm:th_logd}, the number of contacted machines is logarithmic in $d$. The corresponding threshold $\tm$ given by \eqref{eq:tm_logd} is relatively small.
In the intermediate regime, considered in Theorem \ref{thm:th_M}, all $M$ machines are contacted and the threshold $\tm$, given by Eq. \eqref{eq:tm}, increases as a function of $M$.
Finally, when the number of available machine is sufficiently large, 
as described in Theorem \ref{thm:th_large_M}, the center
contacts only a subset $M_c$ of all machines, where the value of $M_c$ is chosen to minimize the total communication, while still achieving exact support recovery with high probability.  
The proofs appear in Appendix \ref{sec:proof_thm_th}.}
\begin{subtheorem}{thm}\label{thm:th}
	\begin{thm} \label{thm:th_logd}
		Assume that $d\geq 16$ and $M\geq 16\log d $. Further assume $\frac{\log 5}{\log \left( d-K\right) }<r < 1$.
		Then, with probability at least $1-\left( K+1\right)/ d$, the thresholding algorithm with $\Meff =  \left\lceil 16\log d \right\rceil $ and
		\begin{equation}\label{eq:tm_logd}
		\tm = \sqrt{2 r \log \left( d-K\right)}
		\end{equation} 
		recovers the support of the $K$-sparse vector $\muv$ using  
		\begin{equation}\label{eq:th_comm_logd}
		O\left(\left( d-K\right)^{1-r}r^{-0.5}\log^{1.5}d+K\log^{2}d\right)
		\end{equation}
		communication bits in expectation. 
	\end{thm}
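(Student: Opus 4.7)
The plan is to reduce support recovery to showing that every support index receives at least $\Mc/4$ votes while every non-support index receives strictly fewer. Since the algorithm outputs the $K$ indices with the most votes, this separation forces exact recovery. I will bound the failure events on the two sides separately via binomial concentration, and then compute the expected message lengths.

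For a support index $k \in \Omega$, the probability that machine $i$ contributes a vote is $P(\err_{i,k} > \tm - \mu_k)$, which is at least $1/2$ because $\mu_k \geq \mumin = \tm$. Thus $|\V_k|$ stochastically dominates $\mathrm{Bin}(\Mc, 1/2)$, and Hoeffding's inequality gives $P(|\V_k| < \Mc/4) \leq e^{-\Mc/8} \leq d^{-2}$ since $\Mc \geq 16\log d$. A union bound over the $K$ support indices yields failure probability at most $K/d^2$.

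For a non-support index $j \notin \Omega$, $|\V_j| \sim \mathrm{Bin}(\Mc, q)$ with $q = P(\err_{i,j} > \tm)$. The standard Gaussian tail bound gives $q \leq (d-K)^{-r}/(2\sqrt{\pi r \log(d-K)})$. The hypothesis $r > \log 5/\log(d-K)$ forces $(d-K)^{-r} < 1/5$, and together with $d \geq 16$ this implies $4eq \leq e^{-1/2}$ (a quick numerical check). Then the binomial tail estimate
\begin{equation*}
P(|\V_j| \geq \Mc/4) \leq \binom{\Mc}{\Mc/4} q^{\Mc/4} \leq (4eq)^{\Mc/4} \leq e^{-\Mc/8} \leq d^{-2},
\end{equation*}
combined with a union bound over the $d-K$ non-support indices, gives failure probability at most $1/d$. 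Adding the two contributions yields a total failure probability of at most $K/d^2 + 1/d \leq (K+1)/d$.

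For the expected communication, the setup phase transmits an $O(\log d)$-bit representation of $\tm$ to each of $\Mc = O(\log d)$ machines, contributing $O(\log^2 d)$ bits. Each machine response then reports a list of indices at $O(\log d)$ bits per index, with expected list length $\E|\y_i| \leq K + (d-K)q$. Summing over the $\Mc$ contacted machines gives total expected communication $O(\Mc \log d \cdot (K + (d-K)q)) = O(K \log^2 d) + O((d-K)^{1-r}\, r^{-1/2} \log^{3/2} d)$, matching Eq.~\eqref{eq:th_comm_logd}. The main technical obstacle will be tracking the numerical constants carefully enough to verify $4eq \leq e^{-1/2}$ from the hypotheses $r > \log 5/\log(d-K)$ and $d \geq 16$; a secondary issue is the quantization error from using $\htm$ in place of $\tm$, but this error is at most $2^{-\lceil \log_2 d \rceil} \leq 1/d$ and perturbs $p$ and $q$ only negligibly, so it is readily absorbed into the constants.
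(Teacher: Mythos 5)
Your proposal is correct and follows essentially the same route as the paper: separate support from non-support indices by a vote cutoff of about $4\log d$ (your $\Mc/4$ equals the paper's $\tc$), lower-bound the per-machine vote probability of a support index by $1/2$ since $\tm=\mumin$, upper-bound the non-support vote probability via the Gaussian tail together with $r>\log 5/\log(d-K)$, apply binomial concentration and union bounds, and compute the expected communication from $\Mc\log d\,(K+(d-K)q)$, including the same treatment of the quantized threshold $\htm$. The only differences are cosmetic (Hoeffding and the $\binom{\Mc}{\Mc/4}q^{\Mc/4}$ bound in place of the paper's Chernoff lemmas), and your failure bound $K/d^2+1/d$ is even marginally sharper than the stated $(K+1)/d$.
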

The communication cost \eqref{eq:th_comm_logd} is sublinear in $d$ for all $r>\frac{2\log \log d}{\log \left( d-K\right) }$ and $K\ll d/\log^{2}d$.
{Note that in the above theorem, the number of contacted machines is fixed at $16\log d$ and correspondingly, the threshold does not depend on the total number of machines $M$. 
The next theorem shows that contacting all machines with a higher threshold that depends on the total number of machines, can lead to exact support recovery with even less communication than \eqref{eq:th_comm_logd}.
}

\begin{thm} \label{thm:th_M}
        Let $d\geq 15$ and assume that $32\sqrt{e\pi}\log^{1.5}d \leq M \leq d$. Further assume $r<1$ and that
        \begin{equation}\label{eq:r_th_M}
        r>\frac{\left(\sqrt{2\log\frac{5M}{\sqrt{2\pi}4\log d}}-\sqrt{2\log\frac{M}{32\sqrt{\pi}\log^{1.5}d}}+\frac 1d\right)^{2}}{2\log\left(d-K\right)}.
        \end{equation}
        Then, with probability at least $1-\left( K+1\right)/ d$, the thresholding algorithm with $\Meff = M$ and
        \begin{equation}\label{eq:tm}
        \tm=\sqrt{2 r \log \left( d-K\right)}+\sqrt{2\log{\frac{M}{32\sqrt{\pi}\log^{1.5}d}} }
        \end{equation} 
        recovers the support of the $K$-sparse vector $\muv$ using  
        \begin{equation}\label{eq:th_comm}
        O\left(KM\log d+\left(d-K\right)^{1-r}e^{-2\sqrt{r\log\left(d-K\right)\log\frac{M}{32\sqrt{\pi}\log^{1.5}d}}}\log^{2.5}d\right)
        \end{equation}
        communication bits in expectation. 
\end{thm}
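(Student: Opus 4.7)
The plan is to split the argument into a correctness component (the center outputs $\hat\Omega=\Omega$ with probability $\geq 1-(K+1)/d$) and an expected-communication bound; the threshold in \eqref{eq:tm} is engineered so that every support coordinate collects many votes, every non-support coordinate collects few, and the two vote histograms separate. To manage constants I would introduce $\Lambda:=32\sqrt{\pi}\log^{1.5}d$, $\alpha:=\sqrt{2\log(M/\Lambda)}$ (so that $\tm=\mumin+\alpha$), and $T_0:=\sqrt{2\log(5M/(4\sqrt{2\pi}\log d))}$, and observe that hypothesis \eqref{eq:r_th_M} rearranges to $\tm>T_0+1/d$. Since the quantization in Algorithm \ref{alg:th} loses at most $2^{-\lceil\log_2 d\rceil}\leq 1/d$, we get $\htm>T_0$ and $\htm\leq\mumin+\alpha+1/d$, i.e.\ the gap between $\htm$ and $T_0$ survives truncation.

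For correctness I would choose a voting threshold $\tau:=\lceil 4\log d\rceil$ and establish two complementary concentration events. \emph{Non-support control:} for $j\notin\Omega$, $\x_{i,j}\sim\mathcal{N}(0,1)$, and Mills' ratio gives $P(\x_{i,j}>\htm)\leq 1-\Phi(T_0)\leq \frac{4\log d}{5MT_0}$. The hypotheses $d\geq 15$ and $M\geq 32\sqrt{e\pi}\log^{1.5}d$ force $T_0$ to exceed a fixed constant much larger than $e/5$, so the upper Chernoff bound $P(\mathrm{Bin}(M,p)\geq\tau)\leq(eMp/\tau)^\tau$ combined with a union bound over $d-K$ coordinates yields $P(\max_{j\notin\Omega}\v_j\geq\tau)\leq 1/d$. \emph{Support control:} for $j\in\Omega$ we have $P(\x_{i,j}>\htm)\geq P(\err_{i,j}>\alpha+1/d)$; the Mills lower bound $1-\Phi(x)\geq\tfrac{x}{x^2+1}\phi(x)$ together with $\phi(\alpha)=\Lambda/(M\sqrt{2\pi})$ and the crude estimate $\alpha\leq\sqrt{2\log d}$ (from $M\leq d$) give $Mp_{\text{in}}^{(j)}\geq 8\log d$. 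The Chernoff lower tail $P(\v_j<Mp_{\text{in}}^{(j)}/2)\leq\exp(-Mp_{\text{in}}^{(j)}/8)\leq 1/d$ and a union bound over $K$ support coordinates contribute the remaining $K/d$. On the intersection, every support count exceeds $\tau$ while every non-support count lies below $\tau$, so the top-$K$ selection returns $\Omega$.

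For the communication bound, expanding $\tm^2/2=r\log(d-K)+\log(M/\Lambda)+2\sqrt{r\log(d-K)\log(M/\Lambda)}$ gives
\[
e^{-\tm^2/2}=(d-K)^{-r}\cdot\tfrac{\Lambda}{M}\cdot\exp\bigl(-2\sqrt{r\log(d-K)\log(M/\Lambda)}\bigr).
\]
The setup transmits a truncated $\tm$ of length $O(\log d)$ to each of $M$ machines, and each machine returns $|\y_i|$ indices encoded in $\lceil\log_2 d\rceil$ bits with $\E|\y_i|\leq K+(d-K)p_{\text{out}}$; the total expected traffic is therefore at most $O(KM\log d)+O\bigl(M(d-K)p_{\text{out}}\log d\bigr)$. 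Substituting the Mills upper bound $p_{\text{out}}\leq e^{-\htm^2/2}/(\htm\sqrt{2\pi})$, using $\htm\geq 1$, $\Lambda/\sqrt{2\pi}=16\sqrt{2}\log^{1.5}d$, and the fact that the quantization slack contributes only a factor $e^{\tm/d}=1+o(1)$, recovers exactly the form in \eqref{eq:th_comm}. The main obstacle is constant calibration: one has to choose a single $\tau$ that lies simultaneously above the concentration window of every non-support count and below that of every support count, and the slightly mysterious factor $5M/(4\sqrt{2\pi}\log d)$ inside \eqref{eq:r_th_M} is precisely the budget engineered to keep these windows separated after paying for the $1/d$ quantization loss and the Chernoff constants on both sides.
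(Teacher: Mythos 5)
Your proposal is correct and follows essentially the same route as the paper: lower-bound $p_{\min}$ via the Gaussian lower tail at $\sqrt{2\log\frac{M}{32\sqrt{\pi}\log^{1.5}d}}$ to get $Mp_{\min}\geq 8\log d$, upper-bound each non-support probability via the tail at $\sqrt{2\log\frac{5M}{4\sqrt{2\pi}\log d}}$ to get $Mp_j\leq\frac{4}{5}\log d$, separate the vote counts with two-sided Chernoff bounds at the level $4\log d$ (the paper packages these as Lemmas \ref{lem:supp_prob} and \ref{lem:non_supp_prob}), and obtain the communication bound by expanding $e^{-\tm^2/2}$ exactly as in the paper. One small inconsistency: on the support side you pad the threshold to $\alpha+1/d$ but then evaluate $\phi$ at $\alpha$; since the computation $Mp_{\min}\geq 8\log d$ has no slack, you should instead use $\htm\leq\tm$ (truncation only rounds down), which legitimately reduces the support event to $\Pr[\err>\alpha]$ and makes your constants exact.
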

{
It is interesting to study the behavior of the total communication cost in Eq.
\eqref{eq:th_comm}. The first term increases with $M$, whereas the second term decreases with $M$. It is easy to show that the total communication cost is
minimized at $M_{\mbox{\tiny opt}} = \tilde{O}\left(\left( d-K\right) ^{\left( 1-\sqrt{r}\right) ^2} \right)$. 
This leads to a perhaps 
 counter-intuitive result, that 
 in the range $O\left( \log^{1.5}d\right)  < M < M_{\mbox{\tiny opt}}$, 
  as the number of machines increases 
 exact recovery is possible with \textit{less} total communication. 
 Once the number of available machines is larger than $M_{\mbox{\tiny opt}}$, there is no benefit in contacting all machines. In terms of total communication, it is best to simply contact $M_c=M_{\mbox{\tiny opt}}$ of them, as stated in the following theorem. 
}
\begin{thm}\label{thm:th_large_M}
	Assume that $d-K\geq 20$ and $\left( \frac{\log 10}{\log \left( d-K\right) }\right) ^2<r<1$. Let 
	\begin{equation}\label{eq:Mlarge}
	\Meff=\left\lceil  \frac{8\sqrt{2\pi}\left(\left(1-\sqrt{r}\right)^{2}2\log\left(d-K\right)+1\right)}{\left(1-\sqrt{r}\right)\sqrt{2\log\left(d-K\right)}}\left(d-K\right)^{\left(1-\sqrt{r}\right)^{2}}\log d\right\rceil,
	\end{equation}
	and assume that $M \geq \Meff$. 
	Then, with probability at least $1-\left( K+1\right)/ d$, the thresholding algorithm with 
	\begin{equation}\label{eq:tm_Mlarge}
	\tm=\sqrt{2 \log \left( d-K\right)}
	\end{equation}
	and $\Mc$ machines recovers the support of the $K$-sparse vector $\muv$ using  
	\begin{equation}\label{eq:th_comm_large_M}
	O\left(K \left( d-K\right) ^{\left( 1-\sqrt{r}\right) ^{2}} \log^{2.5} d\right) 
	\end{equation}
	communication bits in expectation. 
\end{thm}
\end{subtheorem} 

Let us now compare the Top-$L$ and thresholding algorithms, in terms of communication cost and recovery guarantees. 
By Theorems \ref{thm:topL} and \ref{thm:th_large_M}, with appropriately set parameters the algorithms exhibit qualitatively similar performances for high SNR and large number of machines $M$. The main differences between the two algorithms occur when $M$ is small, for example logarithmic in $d$. If the SNR is low, for example $r=O\left(\log^{-2} d \right) $, then the Top-$K$ algorithm might fail to recover the support, whereas, by Theorem \ref{thm:th_logd}, the thresholding algorithm succeeds to recover it. However, substituting $r=O\left(\log^{-2} d \right) $ in Eq. \eqref{eq:th_comm_large_M} results in total communication cost superlinear in $d$. 
In contrast, if the SNR is slightly higher, namely $r=O\left(\frac{ \log^2\log d}{\log^2 d} \right) $, then by Theorems \ref{thm:topL} and \ref{thm:th_logd}, with high probability both algorithms succeed, and the Top-$K$ algorithm incurs less total communication cost than the thresholding algorithm.
However, the thresholding algorithm is more robust in the following sense. If the sparsity level $K$ is fixed and the center only knows an upper bound on it $K_{\max}=cK$ for $c> 1$, then the Top-$K_{\max}$ algorithm incurs a communication cost that is linear in $c$, while the thresholding algorithm incurs a communication cost that is roughly the same as when $c=1$.

\subsection{Extension to sub-Gaussian noise}\label{sec:sub-G}
{Let us outline in this section how some of our results above can be extended to the case of additive sub-Gaussian noise. 
Specifically, we assume that 
each machine $i$ has $n$ i.i.d. samples of the form $\x^j_i=\muv + \errv^j_i$ for $j=1,\dots,n$, where the mean vector $\muv\in \R^d$ is exactly $K$-sparse and each noise coordinate $\err^j_{i,k}$ is an i.i.d. sub-Gaussian random variable with parameter $\vp^2$ (also known as the variance proxy).
We assume all noise coordinates have the same variance $\subvar^2=E\left[\left( \err^j_{i,k}\right) ^{2}\right]$ and finite third absolute moment $\ThM=E\left[ |\err^j_{i,k}|^{3}\right] $.
It is easy to show that $\vp^2\geq \subvar^2$ \citep[Lemma 1.4]{rigollet201518}. In our analysis, we shall assume
that for some fixed $0< \lambda\leq 1$, 
\begin{equation}\label{eq:lambda}
	\subvar^2\geq \lambda^2 \cdot \vp^2. 
\end{equation}
To account for having $n$ samples per machine, we generalize the definition of the scaling parameter $r$ as follows
\begin{equation}
	\mumin = \frac{\subvar}{\sqrt{n}}\sqrt{2 r \log \left( d-K\right) }.
	\label{eq:mu_min_r_subG}
\end{equation}}

{Denote by thresholding* a variant of the thresholding algorithm, where each contacted machine $i$ computes the following normalized empirical mean vector 
\begin{equation}
	\tilde{\x}_i=\frac{1}{\subvar\sqrt{n}}\sum_{j=1}^n \x^j_{i}. 
\end{equation}
Accordingly, each machine $i$ computes its message as   
\begin{equation}
	\y_i=\left\lbrace j\in \left[ d\right] : \tilde{x}_{i,j}>\htm \right\rbrace .
\end{equation}
Note that the effective signal strength in each machine, corresponding to its sample $\tilde{\x}_i$, is $\frac{\sqrt{n}\mumin}{\subvar}=\sqrt{2 r \log \left( d-K\right) }$, which matches Eq. \eqref{eq:mu_min_r} above. 
}

{Given sufficiently many samples per machine, results similar to those we proved for Gaussian noise hold for the case of sub-Gaussian noise. As an example, the following theorem is a variant of Theorem \ref{thm:th_large_M} for 
the thresholding algorithm.
Its proof appears in Appendix \ref{app:sub_gauss}.
A similar result can be derived for the top-$L$ algorithm.}

\begin{thm}\label{thm:subGaus}
	{Consider exact support recovery with $n$ samples per machine, corrupted by additive sub-Gaussian noise as described above. 
	Assume that $d-K$ is sufficiently large, that 
	for a suitable universal constant $C>0$
	\begin{equation}\label{eq:r_cond}
	\Omega\left(\frac{1}{\log (d-K)} \right)<\left(1-\sqrt{r}\right)^{2}< C \lambda^{2},
	\end{equation}
	and that 
	\begin{equation}
	n=\Omega\left(\frac{\ThM^2}{\subvar^6}  (1-\sqrt{r})^6\log^3 (d-K)\right) . 
	\label{eq:condition_n}
	\end{equation}
	Let $\Mc=O\left( \left( d-K\right) ^{\left( 1-\sqrt{r}\right) ^{2}} \log^{1.5} d\right)$
	and assume that $M \geq \Mc$. 
	Then, with probability at least $1-O\left( \frac{K}d\right)$, the thresholding* algorithm with 
	\begin{equation}\label{eq:tm_sub}
	\tm= \sqrt{2 \log \left( d-K\right)}
	\end{equation}
	and $\Mc$ machines recovers the support of the $K$-sparse vector $\muv$ using  
	\begin{equation}\label{eq:th_comm_sub}
	O\left(K \left( d-K\right) ^{\left( 1-\sqrt{r}\right) ^{2}} \log^{2.5} d\right) 
	\end{equation}
	communication bits in expectation.  }
\end{thm}

{The proof of Theorem \ref{thm:subGaus} uses both lower bounds and upper bounds on the tail probability of the noise. For the tail lower bound, we use a result of \citet{nagaev2002lower}, which requires a minimal number of samples per machine, as stated in 
Eq. \eqref{eq:condition_n}. Note that this requirement is rather mild. For $r$ bounded away from one, only 
a polylogarithmic in $d$ number of samples per machine suffices. 
For the lower bound to hold, we also require in \eqref{eq:r_cond} that 
the SNR parameter $r$ cannot be arbitrarily close to 1, as otherwise $n$ could tend to zero in Eq. \eqref{eq:condition_n}. In contrast, such an upper bound on $r$ does not appear in Theorem \ref{thm:th_large_M}. }

{Another key difference from Theorem \ref{thm:th_large_M} is a strict lower bound on the SNR $r$, as stated in Eq. \eqref{eq:r_cond}, 
which implies $r > \left(1-\sqrt{C}\lambda \right)^2 $. The reason for this is a rather crude upper tail probability approximation we apply in our proof, which uses the sub-Gaussian property of the noise. We remark that 
if we require a much larger number of samples per machine, then results closer to Theorem \ref{thm:th_large_M}
may be derived, even without assuming sub-Gaussianity of the noise. In particular, with sufficient number of samples per machine, the lower bound on the SNR $r$ will not depend on the parameter $\lambda$. }

\section{Sublinear distributed algorithms with small $\ell_2$ risk}\label{sec:ell_2}

In the previous section we considered distributed estimation of the support of $\muv$. 
Another common task is to estimate the vector $\muv$ itself, with both small $\ell_2$ risk
and low total communication. 
We show that this can be achieved with only a single additional round of communication. 
Furthermore, under certain parameter regimes, specifically $KM\ll \tfrac{d}{\log d}$, the resulting estimate
achieves the centralized $\ell_2$ risk, with sublinear total communication. 
The proof of this result is based on the fact that both of our algorithms achieve exact support
recovery with high probability. We thus first discuss the relation between support recovery
and $\ell_2$ risk, as well as lower bounds for the centralized minimax risk.

\subsection{On exact support recovery and $\ell_2$ risk}
Let us first briefly discuss estimation of $\muv$ in a centralized setting with $M$ samples and noise level $\sigma$.
Without any assumptions on the vector $\muv$, the empirical mean 
$\bar{\x}=\frac1M \sum_i\x_i$ is a rate-optimal estimator. When $\muv$ is assumed to be sparse, various works suggested and theoretically
analyzed the set of diagonal estimators $\Od$. An estimator $\hat{\muv}\in \Od$ has the form 
$\hat{\mu}_j=a_j\left( \bar{x}_j\right)  \bar{x}_j $ for all $j\in\left[ d\right] $, where each $a_j\left( \cdot \right) $ is a scalar function.
For further details see for example \citet[Chapter 11]{mallat1999wavelet}.

\paragraph{Projection oracle risk.}
In analyzing the lowest risk achievable in the set $\Od$, 
a key notion is the \textit{projection oracle risk}, defined as the smallest expected $\ell_2$ error of a diagonal projection estimator $\hat{\muv}^{\oracle}$ but with additional prior knowledge of $\muv$, such that $\hat{\mu}^{\oracle}_j=a_j\left( \mu_j\right)  \bar{x}_j  $ and $a_j\in\left\lbrace 0,1\right\rbrace $. It is easy to show that $\hat{\mu}^{\oracle}_j= \bar{x}_j \cdot {\bf 1}(|\mu_j| > \sigma/\sqrt{M})$. Its corresponding risk is
\begin{equation}\label{eq:R_oracle}
R_{\oracle}(\muv) = \E\left[\left\Vert \muv-\hat{\muv}^{\oracle}\right\Vert ^{2}\right] =\sum_{j=1}^d \min\left\{\frac{\sigma^2}M,\mu_j^2\right\} \leq \frac{K\sigma^2}M . 
\end{equation}
Note that the projection oracle is not a realizable estimator, as it relies on knowledge of the underlying $\muv$ for support recovery. However, the oracle risk provides a lower bound for the risk of any diagonal estimator.  
Also note that given a lower bound on the SNR, of the form 
$\min_{j\in\indset} |\mu_j| > \sigma/\sqrt{M}$, 
the oracle risk is $R_{\oracle}(\muv)  = K\sigma^2/M$.

\paragraph{Centralized lower bound.}
\citet[Theorem 3]{donoho1994ideal} proved the following lower bound on the asymptotic minimax rate among all diagonal estimators,
\begin{equation}
\lim_{d\rightarrow\infty }\inf_{\hat{\muv}\in \Od}\sup_{\muv\in \mathbb{R}^d}\frac{\mathbb{E}[\|\hat{\muv}-\muv\|^2]}{\frac{\sigma^2}M+R_{\oracle}(\muv)}\frac{1}{2 \log d} =1.
\label{eq:centralized_minimax}
\end{equation}
Moreover, they proved that thresholding at a suitable level achieves this minimax rate.

In the result above, no assumptions are made neither regarding the
sparsity of $\muv$, nor on its SNR or equivalently on $\mumin$. 
Indeed, the proof of (\ref{eq:centralized_minimax}) relies on a construction of vectors $\muv$ with $\log d$ coordinates having values slightly smaller than 
$\frac{\sigma}{\sqrt{M}}\sqrt{2\log d}$, namely with a low SNR. 
Thus, it cannot be used as a lower bound for the centralized minimax rate in our setting.
In fact, if $\muv$ is $K$-sparse and $\mumin$ is sufficiently high, then asymptotically as $d\to\infty$ with $\tfrac{KM\log d}{d}\to 0$, the risk of a suitable thresholding
estimator is equal to $R_{\oracle}(\muv)\left( 1 +o(1)\right) $. 
The reason is that in this case one can achieve exact support recovery with high probability. 
We now prove a similar result for the distributed setting.

\subsection{The $\ell_2$ risk of the Top-$L$ and thresholding algorithms}

\begin{algorithm}[t]
	\caption{Protocol $\Pi$}
	\label{alg:Pi}
	\vspace{6pt}
	\textbf{At the fusion center:}\\
	\textbf{Input} estimated support set $\hat{\indset}$\\
	\textbf{Output} setup message $\s$
	\begin{algorithmic}    
		\State {send message $\s$ which contains the set $\hat{\indset}$ to each of the $M$ machines}  \vspace{6pt}
	\end{algorithmic}
	\textbf{At each machine $i=1,\dots,M$:}\\
	\textbf{Input} setup message $\s$, sample $\x_i$, precision parameters $U,P$\\
	\textbf{Output} message $\w_i$ to center
	\begin{algorithmic}[1]
		\State  for each $k\in\hat{\indset}$, calculate $\w_{i,k}=\Trunc(x_{i,k},U,P)$
		\State	send to center $\w_i=\left\lbrace \w_{i,k}:k\in\hat{\indset}\right\rbrace $ 
		\vspace{6pt}
	\end{algorithmic}
	\textbf{At the fusion center:}\\
	\textbf{Input} messages $\w_1,\dots,\w_M$\\
	\textbf{Output} estimated vector $\hat{\muv}$
	\begin{algorithmic}[1]     
		\State  for each $i\in[M]$ and each $k\in\hat{\indset}$, reconstruct $z_{i,k}=\Approx(\w_{i,k},U,P)$
		\State	for each $k\in\hat{\indset}$, calculate the mean $\bar{z}_k=\frac{1}{M}\sum_{i\in[M]}z_{i,k}$
		\State	return $\hat{\muv}^{\Pi}$ where $\hat{\mu}^{\Pi}_j=\bar{z}_{j}\cdot \mathds{1}\left\lbrace j\in\hat{\indset} \right\rbrace$
	\end{algorithmic}
\end{algorithm} 

The Top-$L$ and thresholding algorithms described in Section \ref{sec:algs}, output an estimated support set $\hat\indset$. As we describe now, using an additional round of communication, the center can also estimate the vector $\muv$ itself. In particular, we consider the following protocol, denoted $\Pi$:  
First, the center sends the indices of $\hat\indset$ to all $M$ machines. Then, each machine $i$ replies with the binary representation $\w_{i,k}=\Trunc(x_{i,k},U,P)$ for the estimated support coordinates $k\in\hat{\indset}$, for appropriately chosen $U,P=O(\log d)$. 
The center {computes} $z_{i,k}=\Approx(\w_{i,k},U,P)$ and calculates the empirical mean $\bar{z}_k=\frac{1}{M}\sum_{i\in[M]}z_{i,k}$. Finally, the center estimates $\muv$ as follows \[\hat{\mu}^{\Pi}_j=\bar{z}_{j}\cdot \mathds{1}\left\lbrace j\in\hat{\indset} \right\rbrace.\]
The scheme is outlined in Algorithm \ref{alg:Pi}.

{The following corollary shows that applying $\Pi$ to the set $\hat{\indset}$ computed by one of our algorithms yields an estimator $\hat{\muv}^{\Pi}$ with $\ell_2$ risk $R_{\Pi}=\E\left[\left\Vert \muv-\hat{\muv}^{\Pi}\right\Vert ^{2}\right]$ which is near-oracle}. Its proof appears in Appendix \ref{app:cor_ell_2}.
\begin{corollary}\label{cor:ell_2}
	Let $d\geq 5$. Assume that the conditions of Theorem \ref{thm:topL} hold and let $\hat{\indset}\subset [d]$ be the estimate computed by the Top-$L$ algorithm. 
	In addition, assume that $\mumax<d^{\gamma}$ for $\gamma>0$. 
	Then, the $\ell_2$ risk of $\hat{\muv}^{\Pi}$ with precision parameters $P=\left\lceil\log_2 d \right\rceil$ and $U=\left\lfloor\log_2 (d^{\gamma}+\sqrt{4(\gamma+1)\log d})\right\rfloor$ is bounded as follows
	\begin{equation}\label{eq:R_Pi}
	R_{\Pi}  \leq \frac{K}{M}\left(1+d^{-1}+d^{-2}\right) +\frac{2K\mumin^2}{d}.
	\end{equation}
	The expected total communication cost of $\Pi$ is $O\left( KM\log d \right) $. 
	Thus, in an asymptotic setting where $K,M,d\to\infty$ with $\frac{KM\log d}{d} \to 0$, the protocol $\Pi$ has sublinear expected communication cost and its $\ell_2$ risk is $R_{\oracle}\left( \muv\right)\left( 1 +o(1)\right) $. 
\end{corollary}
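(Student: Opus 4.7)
The plan is to decompose the expected squared error $R_\Pi$ based on a high-probability good event $A$ on which both support recovery succeeds and every relevant sample $\x_{i,k}$ (for $i\in[M]$ and $k\in\hat\Omega$) satisfies $|\x_{i,k}|\leq 2^{U+1}$, so that the procedure $Approx$ reconstructs each $\x_{i,k}$ up to an additive error of at most $2^{-P}\leq 1/d$. Theorem~\ref{thm:topL} bounds the support-recovery failure by $K/d + e^3/d^3$, and a Gaussian tail plus union bound over the $KM$ noises, combined with $\mumax<d^\gamma$ and the prescribed $U$, yields $P(A^c)=O(K/d)$.

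On $A$ one has $\hat\Omega=\Omega$, hence $\|\muv-\hat{\muv}^{\Pi}\|^2 = \sum_{k\in\Omega}(\bar{\z}_k-\muv_k)^2$. I would decompose $\bar{\z}_k-\muv_k = (\bar{\x}_k-\muv_k)+(\bar{\z}_k-\bar{\x}_k)$, where the first summand is $\Nor(0,1/M)$ and the second is an average of $M$ truncation errors, bounded deterministically by $1/d$ on $A$. Expanding the square, taking expectation, and handling the cross term via Cauchy--Schwarz with $\E|\bar{\x}_k-\muv_k|\leq 1/\sqrt{M}$ produces a per-coordinate bound of the form $\frac{1}{M}(1+d^{-1}+d^{-2})$. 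Summing over the $K$ support coordinates yields the first term in \eqref{eq:R_Pi}.

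The main obstacle will be bounding the contribution on $A^c$. My plan is to split it coordinate-wise: a missed support index $j\in\Omega\setminus\hat\Omega$ contributes $\muv_j^2$, while a false positive $j\in\hat\Omega\setminus\Omega$ contributes $\bar{\z}_j^2$. The key point, which I expect to require revisiting the proof of Theorem~\ref{thm:topL}, is that a Top-$L$ mis-classification is essentially a swap across the decision boundary at level $\sim\mumin$: both the support entries that can be missed and the non-support entries that can be included have effective value of order $\mumin$, so the per-index error on each mistake is $O(\mumin^2)$ rather than $O(\mumax^2)$. Combined with the per-index miss probability $\leq 1/d$ implicit in the proof of Theorem~\ref{thm:topL}, this gives $\E|\Omega\triangle\hat\Omega|\leq 2K/d$, and hence a total failure contribution bounded by $\mumin^2\cdot\E|\Omega\triangle\hat\Omega|\leq 2K\mumin^2/d$.

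Finally, the communication accounting is immediate. The center broadcasts $\hat\Omega$ using $K\lceil\log_2 d\rceil$ bits to each of the $M$ machines, and each machine replies with $K$ truncated words of length $U+P+2=O(\log d)$, yielding the claimed $O(KM\log d)$ expected bits. The asymptotic statement $R_\Pi=R_{\oracle}(\muv)(1+o(1))$ then follows from the risk bound, since $\mumin^2\leq 2\log d$ forces $2K\mumin^2/d$ to be $o(K/M)$ whenever $KM\log d/d\to 0$.
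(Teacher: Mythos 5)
Your overall architecture (estimation error on the recovered support, plus truncation error, plus a failure-event contribution) is in the same spirit as the paper's, but the step you flag as "requiring revisiting the proof of Theorem \ref{thm:topL}" is exactly where the proposal has a genuine gap, and the hand-waved fix is not correct as stated. A missed support coordinate $k\in\Omega\setminus\hat{\Omega}$ contributes $\muv_k^2$, and $\muv_k$ can be as large as $\mumax<d^{\gamma}$; it is simply false that "the per-index error on each mistake is $O(\mumin^2)$". The only way to rescue the bound is to show that $\Pr[k\notin\hat{\Omega}]$ decays fast enough in $\muv_k$ to beat the growth of $\muv_k^2$, and the per-index miss probability $\leq 1/d$ that you extract from Theorem \ref{thm:topL} is independent of $\muv_k$, so $\mumin^2\cdot\E\bigl|\Omega\triangle\hat{\Omega}\bigr|$ does not dominate $\sum_{k\in\Omega}\muv_k^2\Pr[k\notin\hat{\Omega}]$; naively it could be as large as $d^{2\gamma-1}$, which can exceed the entire right-hand side of \eqref{eq:R_Pi}. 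The paper's proof spends its main effort precisely here: it bounds $\muv_k^2\Pr[k\notin\hat{\Omega}]$ coordinate-wise via a Chernoff bound whose exponent involves $\Meff p_k$ with $p_k$ given by \eqref{eq:pk_topL} (so the exponent improves with $\muv_k$), conditions on the good event of Lemma \ref{lem:non_supp_prob}, and argues the resulting expression \eqref{eq:func_to_min} is maximized at $\muv_k=\mumin$, yielding $2\mumin^2/d$. Also note the paper never needs your symmetric-difference accounting for false positives: since $|\hat{\Omega}|=K$ deterministically, $\sum_j\Pr[j\in\hat{\Omega}]=K$, and false positives are absorbed into the $\frac{K}{M}(1+d^{-1}+d^{-2})$ term.

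Two secondary issues. First, on your bad event $A^c$ the truncation error is not merely "a mistake of bounded size": when $|\x_{i,k}|>2^{U+1}$ the reconstruction error can be as large as $|\x_{i,k}|$ itself, so bounding $\Pr[A^c]=O(K/d)$ does not by itself control its contribution to an expected squared error; you need either a fourth-moment/Cauchy--Schwarz argument or the paper's explicit computation $\int_{2^{U+1}}^{\infty}x^2 e^{-(x-\muv_j)^2/2}/\sqrt{2\pi}\,dx\leq 1/(2d)$, which is where the specific choice of $U$ (with the $\sqrt{4(\gamma+1)\log d}$ slack) is actually used. Second, your cross term via Cauchy--Schwarz gives a per-coordinate contribution of order $\frac{2}{d\sqrt{M}}$, not $\frac{1}{Md}$, so you would not recover the exact bound \eqref{eq:R_Pi} (the asymptotic $R_{\oracle}(\muv)(1+o(1))$ conclusion survives, since $\sqrt{M}/d\to 0$ under $\frac{KM\log d}{d}\to 0$, but the stated finite-$d$ inequality would need the paper's direct decomposition of $\E[(\muv_j-\bar{\z}_j)^2]$ rather than a conditioning-plus-Cauchy--Schwarz route). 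The communication accounting in your last paragraph matches the paper.
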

If we assume that the conditions of either Theorem \ref{thm:th_logd}, Theorem \ref{thm:th_M} or Theorem \ref{thm:th_large_M} hold, then essentially the same proof shows that a two-round algorithm that first estimates the support of $\muv$ by the respective thresholding algorithm and then applies protocol $\Pi$ as a second round to estimate the vector $\muv$ itself can achieve near-oracle $\ell_2$ risk as well. {Similarly, the expected total communication cost is sublinear in $d$ if $KM\ll \tfrac{d}{\log d}$.}

\begin{remark}
	{An interesting question is whether one round of sublinear communication suffices to estimate $\muv$ with near-oracle $\ell_2$ risk. 
		A natural candidate solution is a variant of the thresholding algorithm where each machine sends its indices that pass the threshold $\tm$ and their corresponding coordinate values truncated to $O(\log d)$ precision.
		If the number of machines is large, then our analysis suggests that only a small fraction of the machines would send messages to the center, which would result in high risk compared to the centralized risk.
		However, if $M=O(\log d)$, then by our analysis of Theorem 2.A, at least half of the machines would send to the center each of the support elements, which should be sufficient information for estimating $\muv$ with near-centralized rate. Note that the sent coordinate values are biased, and thus simply computing their mean would result in an over-estimate of each $\mu_j$.
		Therefore, the analysis of Theorem \ref{thm:th_logd} and Corollary \ref{cor:ell_2} cannot be applied directly to this one-round variant. 
		We believe that a more delicate fusion technique should result in estimating $\muv$ with small $\ell_2$ risk, but we do not investigate this further due to our focus on support recovery.
}
\end{remark}

\section{Relation to previous works}\label{sec:prev}
In the context of the distributed sparse normal means problem, several works derived communication lower bounds for exact support recovery and for the $\ell_2$ risk of any distributed scheme with total communication budget $B$. 
We now describe in further detail three closely related previous works and their relation to our results.

\subsection{Lower bounds on the $\ell_2$ risk in distributed settings}
\citet[Theorem 4.5]{braverman2016communication} and \citet[Theorem 7]{han2018geometric} derived communication lower bounds for the distributed minimax $\ell_2$ risk of estimating a $K$-sparse vector $\muv$. Their results imply that to achieve the centralized minimax rate, the required total communication by any distributed algorithm  must be at least linear in $d$. 
However, their proof relies on sparse vectors with a very low signal-to-noise ratio.
In contrast, in scenarios where the SNR is sufficiently high these bounds do not apply, and as our theoretical analysis reveals, 
both exact support recovery and rate-optimal $\ell_2$ risk are achievable with sublinear communication, provided that $KM\ll \tfrac{d}{\log d}$. 

In more detail, 
\citet{braverman2016communication} considered blackboard communication protocols, where all machines communicate via a public blackboard and the total number of bits that they can write in the transcript is bounded by $B$. Denote the set of estimators whose inputs are blackboard communication protocols by $\Obb$ and the set of all $K$-sparse $d$ dimensional vectors as $\SdK$.
Their Theorem 4.5 states that if $d>2K$, then the $\ell_2$ risk of any distributed estimator in this model is lower bounded by
\begin{equation}
\Rbb=\inf_{\hat{\muv}\in \Obb}\sup_{\muv\in\SdK}\mathbb{E}[\|\hat{\muv}-\muv\|^2] \geq  \Omega\left(  
\min\left\lbrace \sigma^2 K,\max\left\lbrace \sigma^2 K\frac{d}B,\frac{\sigma^2 K}{M} \right\rbrace \right\rbrace \right).
\label{eq:LB_Braverman}
\end{equation}
Note that if the total communication $B$ is sublinear in $d$, then the above simplifies to $\Omega(\sigma^2 K)$, which is significantly larger 
than the centralized minimax rate, Eq. (\ref{eq:centralized_minimax}).  
The reason is that $\Rbb$ involves a supremum over all $K$-sparse vectors, without any assumptions on their SNR. 
Indeed, in their analysis a vector $\muv$ with extremely low SNR is used to prove the bound. 

\citet{han2018geometric} considered a more restricted case of one-shot protocols where each of the $M$ machines has a budget of at most $b$ bits that are sent simultaneously to the center, i.e. $B=Mb$. 
Denote by $\Osim$ the set of estimators based on such protocols. Their Theorem 7 states that if $d\geq 2K$ and $M\geq\frac{Kd^2\log\left( d/K\right)}{\min\left\lbrace b^2,d^2\right\rbrace }$, then the risk is lower bounded by
\begin{equation}
\Rsim=\inf_{\hat{\muv}\in \Osim}\sup_{\muv\in\SdK}\mathbb{E}[\|\hat{\muv}-\muv\|^2] \geq \Omega\left( \frac{\sigma^2 K}{M} \cdot \log\left( d/K\right)  \cdot  
\max\left\lbrace \frac{d}{b},1 \right\rbrace  \right).
\label{eq:LB_Han}
\end{equation}
Two remarks are in place here.
First, our protocol $\Pi$ described in Section \ref{sec:ell_2} requires two rounds of two-way communication between the center and the machines instead of one-round of one-way communication from the machines to the center. In addition, during the first round a subset of the machines may not be contacted and thus remain idle. For these reasons our estimator $\hat{\muv}^{\Pi}$ is not in $\Osim$, and thus the above lower bound does not apply to it.

Second, the lower bound \eqref{eq:LB_Han} does not apply for estimators in $\Osim$ with sublinear communication, since the condition on $M$ translates to requiring $B\geq d$. To show this, notice that if $B<d$ then in particular each machine has a sublinear communication budget, i.e., $b=d^{\beta}$ for $0<\beta<1$. The requirement on the number of machines then translates to $M\geq Kd^{2-2\beta}\log\left( d/K\right)$, and thus the total communication budget is $B=Mb\geq Kd^{2-\beta}\log\left( d/K\right)$, which is superlinear in $d$ for all $\beta<1$.

\subsection{Lower bound on exact support recovery in a distributed setting}
{\cite{shamir2014fundamental} proved  lower bounds for several distributed estimation problems under communication
constraints.
Shamir considered distributed $\left(b,n,M \right) $ protocols whereby each machine $i\in[M]$ constructs a message $\y_i$ of length at most $b$ bits based on its own $n$ i.i.d. samples and the messages $\y_1,\dots \y_{i-1}$ sent by the previous $i-1$ machines. 
Shamir considered a specific problem of distributed detection of a special coordinate $j \in[d]$, whose mean is $\tau>0$, whereas the mean of all other coordinates $i \neq j $ is zero. 
The following corollary of Shamir's Theorem 6 upper bounds the success probability of detecting $j $ by any distributed $\left(b,n,M \right) $ protocol. 
For completeness, its proof appears in the appendix.}

\begin{corollary}\label{cor:Shamir_sub}
	{Consider the class of exact support recovery problems in $d \geq 21$ dimensions, and all possible distributions of a $d$-dimensional random vector $\u$ such that: 
	\begin{enumerate}
		\item There exists one coordinate $j $ for which $\E\left[u_{j} \right]=\tau>0 $ with $\tau=O(\tfrac{1}{d\log d})$, whereas $\E\left[u_{i}\right]=0 $ for all other coordinates $i \neq j $.
		\item All coordinates $i\in[d]$ have the same second moment $\subvar^2=\E[u_i^2]=\tfrac1d$.
		\item For all coordinates $i\in[d]$, the random variable $\left( u_i-\E\left[u_i \right]\right) \sim\subG(1)$.
	\end{enumerate}
	Assume that $n \leq cd\log d$ for a suitable constant $c>0$. 
	Then for any estimate $\hat{J}$ of $j$ returned by a $\left(b, n ,M  \right) $ protocol, there exists a distribution as above such that
	\begin{equation}
		\Pr\left[\hat{J}=j\right]\leq O\left(\frac{1}{d}+\sqrt{\frac{Mb}{d}}\right).
		\label{eq:Success_Lower_Bound}
	\end{equation}}
\end{corollary}

{We now discuss the implication of this lower bound to our setting. Assume that
each of $M$ machines has $n$ i.i.d. samples of a vector $\x$ with a distribution as in Corollary \ref{cor:Shamir_sub}. Similar to \eqref{eq:mu_min_r_subG}, we define the effective SNR parameter $r$
via the relation $\tau = \frac{\tilde\sigma}{\sqrt{n}}\sqrt{2r \log d}$. 
Taking $\tau=\tfrac{C}{d\log d}$
and $n = c d \log d$ gives an effective SNR
$r=O(\tfrac1{\log^2 d})$. 
Suppose that each machine sends a message of length $b$ bits, such that the total communication is sublinear in $d$, namely 
$Mb\ll d$. Then by Corollary  \ref{cor:Shamir_sub} the probability of exact support recovery by any $(b,n,M)$ distributed scheme with $n=cd\log d$ samples per machine 
is $o\left(  1\right) $. }

{It is important to remark that the problem considered in our work and that in Corollary  \ref{cor:Shamir_sub} are somewhat different. Specifically, the distribution 
constructed to prove Corollary  \ref{cor:Shamir_sub} is not of the form signal plus noise, with the noise being independent of the signal. The setting where each sample is
of the form of a sparse signal plus additive noise is a sub-class of the distributions considered in Corollary \ref{cor:Shamir_sub}, and thus may admit lower bounds that beat Shamir's bound. 
In fact, as we prove in Section \ref{sec:algs}, for SNR parameters
only slightly higher than $O\left( \frac{1}{\log^2 d}\right) $, namely $r >\Omega \left( \frac{\log^2\log d}{\log^2 d}\right) $, exact support
recovery for signal plus Gaussian noise type observations is possible using sublinear communication. 
It would be interesting to study if any distributed
scheme can recover the support using sublinear communication for SNR values below our aforementioned bound, and to derive tight lower bounds for signal plus noise type distributions.}

\section{Simulations}\label{sec:sim}

\begin{figure}[t!]
	\centering
	\begin{subfigure}[t]{\linewidth}
		\includegraphics[width=\linewidth]{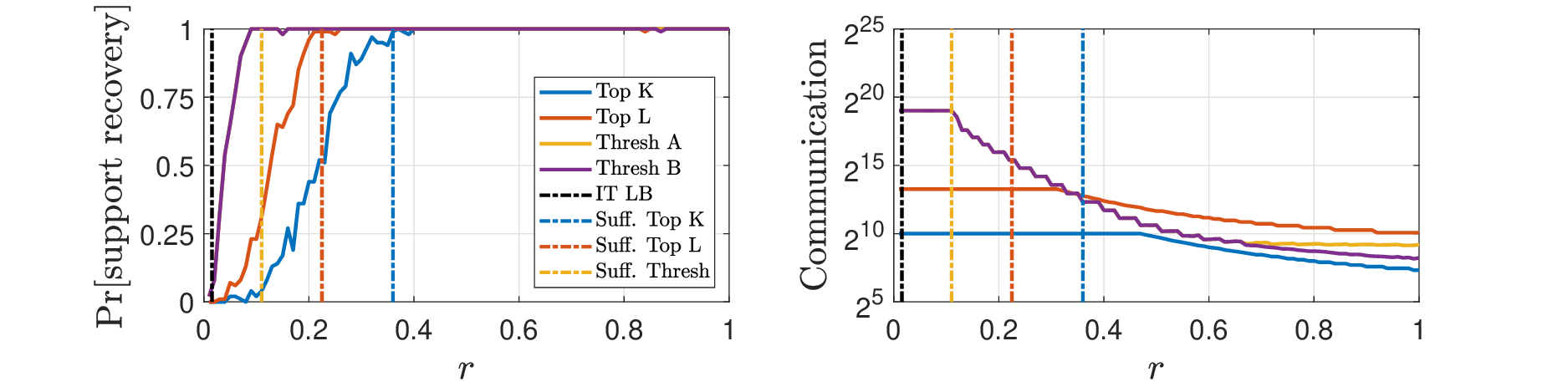}
		\caption{Setting 1: $d=2^{15}$, $M=2^6$, $K=1$, $L=10$}
	\end{subfigure}
	\begin{subfigure}[t]{\linewidth}
		\includegraphics[width=\linewidth]{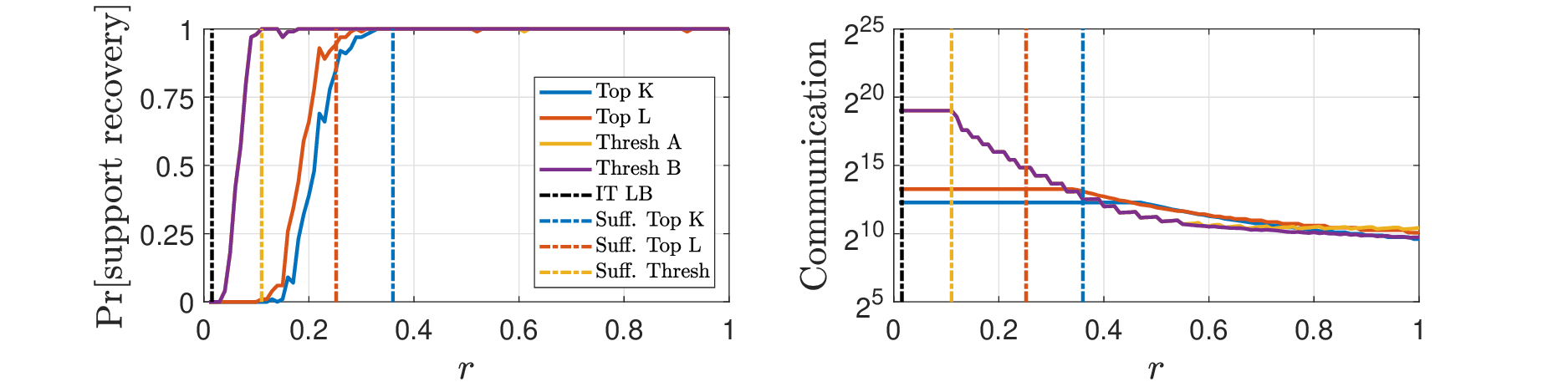}
		\caption{Setting 2: $d=2^{15}$, $M=2^6$, $K=5$, $L=10$}
	\end{subfigure}
	\begin{subfigure}[t]{\linewidth}
		\includegraphics[width=\linewidth]{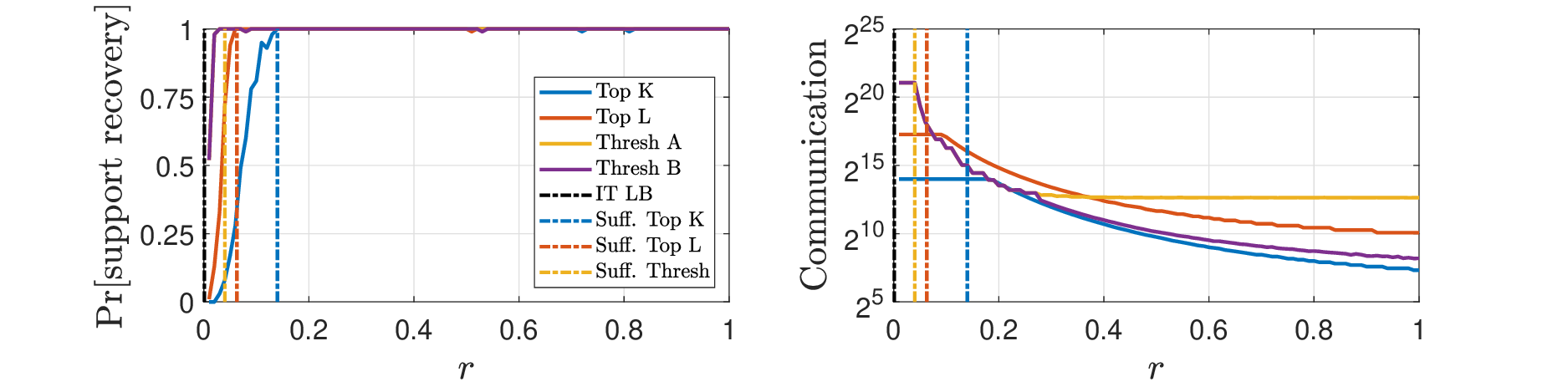}
		\caption{Setting 3: $d=2^{15}$, $M=2^{10}$, $K=1$, $L=10$}
	\end{subfigure}
	\caption{The plots on the left depict the success probability of the algorithms from Section \ref{sec:algs} as a function of $r$ in Settings 1-3. 
		The plots on the right depict the communication cost of the algorithms on a logarithmic scale as a function of $r$ in these settings.
		The blue curve corresponds to the Top-$K$ algorithm, the red curve corresponds to the Top-$L$ algorithm, the orange and purple curves correspond to variants A and B of the thresholding algorithm, respectively. 
		The vertical black line is a lower bound on the performance of all algorithms.
		The colored vertical lines are sufficient SNR bounds for the corresponding algorithms, as described in the main text.
	}
	\label{fig:r}
\end{figure}

We present several simulations that illustrate the ability of our algorithms to detect the support of a $K$-sparse $d$-dimensional vector $\muv$ with sublinear communication. We compare the performance of the Top-$L$ algorithm {with} $L=K$ (blue), the Top-$L$ algorithm {with} $L>K$ (red), variant A of the thresholding algorithm which contacts all machines, i.e., $\Mc=M$ (orange), and variant B of the thresholding algorithm which limits the number of contacted machines, i.e., $\Mc<M$ (purple). See Appendix \ref{sec:th_setting} for details on optimizing simulation parameters.

Figure \ref{fig:r} depicts the success probabilities and communication costs (on a logarithmic scale) of the aforementioned algorithms as a function of $r$, averaged over $100$ noise realizations. 
We consider three different settings of parameters $M$ and $K$. In all settings the dimension is $d=2^{15}$ and in the Top-$L$ algorithm with $L>K$ we set $L=10$. In Setting 1, $M=2^6$ and $K=1$; in Setting 2, $M=2^6$ and $K=5$; and in Setting 3 $M=2^{10}$ and $K=1$.

{The vertical black dashed line is the centralized information theoretic lower bound of $1/M$. 
This line represents the necessary SNR, below which even centralized algorithms fail with high probability. }
In addition, we define a sufficient SNR bound for each algorithm, above which it exactly recovers the support with high probability $1-O(K/d)$. 
The vertical blue and red dashed lines correspond to sufficient SNR bounds for the Top-$K$ and Top-$L$ algorithms, respectively.
The vertical orange dashed line corresponds to the sufficient SNR bounds for the thresholding algorithms.
Note that these bounds are conservative, and while they are quite tight in the presented settings, the actual range of SNRs where the algorithms are successful is often larger.

The simulation results reveal several interesting behaviors. 
First, when the SNR is extremely low, i.e., to the left of the dashed black line, none of our algorithms succeeds with high probability.
Second, no algorithm uniformly outperforms the others for all parameter regimes.
At low SNR values, the thresholding algorithms have a higher success probability compared to the Top-$L$ algorithms, but require higher communication costs. Similarly, at low SNR values the success probability of the Top-$L$ algorithm increases with $L$ at the expense of higher communication.
At high SNR values, all algorithms succeed with high probability, but the communication costs of the algorithms depend on the parameter settings. For example, the Top-$K$ algorithm can either incur a lower communication cost compared to the thresholding B algorithm (Setting 1), or a higher one (Setting 2), or they can be comparable (Setting 3).
In addition, there is a wide range of SNR values for which the communication costs of all algorithms decrease exponentially with $r$ and their total communication costs are sublinear in $d$.

To understand how a higher sparsity level $K$ affects the performance of the algorithms, we compare between Setting 1 and Setting 2. The communication cost of the Top-$K$ algorithm increases linearly with $K$. In contrast, dependence of the communication costs of the thresholding algorithms on $K$ varies with the SNR. Specifically, at low SNR values they are comparable for different values of $K$, but for high SNR values they increase linearly with $K$. This phenomenon is consistent with the higher number of messages containing support indices. 

Finally we compare between Setting 1 and Setting 3 to understand how the availability of more machines affects the performance of the algorithms.
With more machines, the Top-$L$ algorithms succeed at much lower SNR values, at the expense of higher communication costs. 
Variant A of the thresholding algorithm has a higher communication cost in Setting 3 compared to Setting 1 since it uses all machines. However, there is still a large range of SNR values where it is smaller than $d$, due to its adaptive threshold. 
As shown by our proofs, when $M$ is large, variant B of the thresholding algorithm performs similarly to the Top-$K$ algorithm, and they outperform the other algorithms.

\section*{Funding}
R.K. was partially supported by ONR Award N00014-18-1-2364, the Israel Science Foundation grant \#1086/18, and a Minerva Foundation grant. {This research was partially supported by the Israeli Council for Higher Education (CHE) via the Weizmann Data Science Research Center.}

\appendix
\section{Proofs}\label{sec:proofs}
{Denote the complement of the standard normal cumulative distribution function by $\Phi^c(t)=\Pr\left[Z>t \right] $ where $Z\sim\Nor(0,1)$.}
In our proofs we shall use the following well known auxiliary lemmas.
\begin{lemma}[Gaussian tail bounds]\label{lem:GaussianTailBounds}
        For  $t>0$,
        \begin{equation}\label{eq:GaussianTailBoundUpper}
        \frac{t}{\sqrt{2\pi}(t^2+1)}e^{-t^{2}/2} \leq \Phi^c(t)\leq\frac{1}{\sqrt{2\pi}t}e^{-t^{2}/2}. 
        \end{equation}  
        If in addition $t\geq 1$,
        \begin{equation}\label{eq:GaussianTailBoundLower}
        \Phi^c(t)\geq\frac{1}{2\sqrt{2\pi}t}e^{-t^{2}/2}.
        \end{equation}
\end{lemma}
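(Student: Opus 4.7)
The statement packages three classical Gaussian tail inequalities, all of which follow from direct analysis of the integral $\Pr[Z>t]=\frac{1}{\sqrt{2\pi}}\int_t^\infty e^{-x^2/2}\,dx$.

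For the upper bound I would use the standard trick of multiplying the integrand by $x/t\geq 1$ on the range of integration:
\[
\int_t^\infty e^{-x^2/2}\,dx \;\leq\; \int_t^\infty \frac{x}{t} e^{-x^2/2}\,dx \;=\; \frac{e^{-t^2/2}}{t}.
\]
Dividing by $\sqrt{2\pi}$ yields the claimed upper bound.

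For the lower bound $\frac{t}{\sqrt{2\pi}(t^2+1)}e^{-t^2/2}$, the plan is to define
\[
f(t) \;=\; \int_t^\infty e^{-x^2/2}\,dx \;-\; \frac{t}{t^2+1}\,e^{-t^2/2},
\]
and show that $f(t)\geq 0$ for all $t>0$. Since $f(\infty)=0$, it suffices to check that $f$ is monotonically decreasing on $(0,\infty)$. Differentiating, the contribution from the integral gives $-e^{-t^2/2}$, while differentiating the second term using the product rule and simplifying the algebra gives a term of the form $-e^{-t^2/2}\cdot\frac{1-2t^2-t^4}{(t^2+1)^2}$. Combining these and observing the algebraic identity $(t^2+1)^2 + (1-2t^2-t^4)=2$ should yield the clean expression
\[
f'(t) \;=\; -\frac{2\,e^{-t^2/2}}{(t^2+1)^2} \;<\; 0,
\]
which together with $f(\infty)=0$ forces $f(t)\geq 0$. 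Dividing by $\sqrt{2\pi}$ gives the lower bound.

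Finally, for the strengthened lower bound when $t\geq 1$, I would simply observe that $\frac{t}{t^2+1}=\frac{1}{t+1/t}\geq \frac{1}{2t}$ whenever $1/t\leq t$, i.e., $t\geq 1$; the claim is then immediate from the general lower bound just proved.

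The only nontrivial step is the derivative computation for $f$; the algebraic cancellation leading to the identity $(t^2+1)^2+(1-2t^2-t^4)=2$ is the one place a small miscalculation could derail the argument, so I would double-check that simplification carefully. Everything else is routine.
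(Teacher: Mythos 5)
Your proof is correct: the upper bound via multiplying the integrand by $x/t$, the lower bound via showing $f(t)=\int_t^\infty e^{-x^2/2}\,dx-\tfrac{t}{t^2+1}e^{-t^2/2}$ has derivative $-\tfrac{2e^{-t^2/2}}{(t^2+1)^2}<0$ with $f(\infty)=0$ (the algebraic identity $(t^2+1)^2+(1-2t^2-t^4)=2$ does check out), and the $t\geq 1$ case via $t+1/t\leq 2t$ are all sound. The paper states this lemma as a well-known fact without proof, and your argument is exactly the standard Mills-ratio derivation it implicitly relies on.
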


A consequence of Eq. \eqref{eq:GaussianTailBoundUpper} is that the maximum of $n-1$ i.i.d. standard normal random variables $\Z_1,\dots,\Z_{n-1}\sim \Nor\left( 0,1\right)$ is highly concentrated around $\sqrt{2\log n}$. 
In particular, by the well known identity $ \left(1-\frac{1}{n}\right)^{n-1}\geq\frac{1}{e}$, for all $n\geq 2$
\begin{eqnarray}\label{eq:Max_Gaussians}
        \Pr\left[\max_{i\in\left[n-1\right]}\Z_{i}>\sqrt{2\log n}\right] & = & 1-\left(1-\Phi^c\left( \sqrt{2\log n}\right) \right)^{n-1} \nonumber\\
        & \leq & 1-\left(1-\frac{1}{n\sqrt{4\pi\log n}}\right)^{n-1}\leq 1-\left(1-\frac{1}{n}\right)^{n-1}\nonumber\\
        & \leq & 1-e^{-1},
\end{eqnarray}
where the third step follows from $4\pi\log n>1$.

\begin{lemma}[\citet{chernoff1952measure}]\label{lem:Chernoff}
        Suppose $X_1,\dots,X_n$ are i.i.d. Bernoulli random variables and let $X$ denote their sum. Then, for any $\delta\geq 0$,
        \begin{equation}\label{eq:ChenoffAbove}
        \Pr\left[X\geq\left(1+\delta\right)\E\left[X\right]\right]\leq e^{-\frac{\delta^{2}\E\left[X\right]}{2+\delta}}, 
        \end{equation}  
        and for any $0\leq \delta\leq 1$,
        \begin{equation}\label{eq:ChenoffBellow}
        \Pr\left[X\leq\left(1-\delta\right)\E\left[X\right]\right]\leq e^{-\frac{\delta^{2}\E\left[X\right]}{2}}. 
        \end{equation} 
\end{lemma}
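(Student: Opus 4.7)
The plan is to follow the standard Chernoff-bound derivation: combine Markov's inequality applied to the moment generating function (MGF) with the independence of the $X_i$'s, and then optimize over the free parameter in the exponent. Since the lemma is attributed to \citet{chernoff1952measure}, I will give the classical argument.

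First I would prove the upper tail bound \eqref{eq:ChenoffAbove}. Let $\mu=\E[X]=\sum_i p_i$ where $p_i=\Pr[X_i=1]$, and fix any $\lambda>0$. By Markov's inequality applied to the nonnegative random variable $e^{\lambda X}$,
\begin{equation*}
\Pr[X\geq (1+\delta)\mu]=\Pr[e^{\lambda X}\geq e^{\lambda(1+\delta)\mu}]\leq e^{-\lambda(1+\delta)\mu}\,\E[e^{\lambda X}].
\end{equation*}
By independence, $\E[e^{\lambda X}]=\prod_i\E[e^{\lambda X_i}]=\prod_i\bigl(1+p_i(e^{\lambda}-1)\bigr)$, and since $1+y\leq e^{y}$ for all $y\in\mathbb{R}$, this is at most $\exp\bigl(\mu(e^{\lambda}-1)\bigr)$. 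Choosing $\lambda=\ln(1+\delta)>0$ yields the well-known form
\begin{equation*}
\Pr[X\geq (1+\delta)\mu]\leq \left(\frac{e^{\delta}}{(1+\delta)^{1+\delta}}\right)^{\mu}.
\end{equation*}

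Next I would reduce this to the stated bound via the elementary inequality
\begin{equation*}
\frac{e^{\delta}}{(1+\delta)^{1+\delta}}\leq \exp\!\left(-\frac{\delta^{2}}{2+\delta}\right),\qquad \delta\geq 0,
\end{equation*}
which is obtained by taking logarithms and showing $f(\delta):=\delta-(1+\delta)\ln(1+\delta)+\delta^{2}/(2+\delta)\leq 0$; this is immediate from $f(0)=0$ and $f'(\delta)\leq 0$ for $\delta\geq 0$, the latter following from a short calculation using $\ln(1+\delta)\geq 2\delta/(2+\delta)$. Raising both sides to the $\mu$-th power gives \eqref{eq:ChenoffAbove}.

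For the lower tail \eqref{eq:ChenoffBellow}, I would run the same argument with $\lambda<0$: for any $\lambda>0$,
\begin{equation*}
\Pr[X\leq (1-\delta)\mu]=\Pr[e^{-\lambda X}\geq e^{-\lambda(1-\delta)\mu}]\leq e^{\lambda(1-\delta)\mu}\cdot\exp\!\bigl(\mu(e^{-\lambda}-1)\bigr),
\end{equation*}
and optimizing with $\lambda=-\ln(1-\delta)$ (valid for $0\leq\delta<1$) gives
\begin{equation*}
\Pr[X\leq (1-\delta)\mu]\leq \left(\frac{e^{-\delta}}{(1-\delta)^{1-\delta}}\right)^{\mu}.
\end{equation*}
The final step is the inequality $-\delta-(1-\delta)\ln(1-\delta)\leq -\delta^{2}/2$ for $\delta\in[0,1]$, verified by comparing Taylor series: writing $-(1-\delta)\ln(1-\delta)=\sum_{k\geq 1}\delta^{k+1}/(k(k+1))+\delta$ and subtracting $\delta$, the leading term is $\delta^{2}/2$ and the remainder is manageable (or one checks the derivative sign directly). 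Exponentiating yields \eqref{eq:ChenoffBellow}, and the boundary case $\delta=1$ follows by continuity. The only real obstacle is the two elementary inequalities relating the exact Chernoff expressions to the Gaussian-looking bounds; both are standard one-variable calculus exercises.
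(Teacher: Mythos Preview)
The paper does not supply a proof of this lemma; it is stated as a well-known auxiliary result and attributed to \citet{chernoff1952measure}. Your argument is the standard textbook derivation (Markov on the MGF, independence, then the two one-variable inequalities), and it is correct in substance.

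One small slip: in your Taylor expansion for the lower tail you wrote $-(1-\delta)\ln(1-\delta)=\delta+\sum_{k\geq 1}\delta^{k+1}/(k(k+1))$, but the sign on the sum is wrong; the correct expansion is $-(1-\delta)\ln(1-\delta)=\delta-\sum_{k\geq 2}\delta^{k}/(k(k-1))$, which immediately gives $-\delta-(1-\delta)\ln(1-\delta)=-\sum_{k\geq 2}\delta^{k}/(k(k-1))\leq -\delta^{2}/2$. Your fallback (``or one checks the derivative sign directly'') is clean and works: with $h(\delta)=-\delta-(1-\delta)\ln(1-\delta)+\delta^{2}/2$ one has $h(0)=0$, $h'(\delta)=\ln(1-\delta)+\delta$, $h'(0)=0$, and $h''(\delta)=-\delta/(1-\delta)\leq 0$.
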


{Towards proving the main theorems, we introduce a few definitions.
Denote by $I_{i,k}$ the indicator that machine $i$ sends the index $k$ to the fusion center. 
Note that for each $k$, the random variables $I_{1,k},\dots,I_{M,k}$ are independent and identically distributed.
Further denote $p_{k}=\Pr\left[I_{i,k}=1\right]$ and notice that it is the same for all machines $i$.
Our proofs use a stochastic dominance argument for lower bounding the number of votes $\v_k$ received by each support index $k\in\indset$. Towards this goal, we define a Binomial random variable $\vmin\sim Bin \left(\Meff,\pmin \right) $, where $\pmin$ is the probability that machine $i$ sends a support index whose nonzero coordinate is $\mumin$.
By definition of $\mumin$, the random variable $\vmin$ is stochastically dominated by $\v_k\sim Bin \left(\Meff,p_k \right) $ for each $k\in S$.
For exact support recovery, it suffices that for some threshold $\tc$, each support index receives {more than} $\tc$ votes, and each non-support index receives {less than} $\tc$ votes.
For {our} proof, we set the threshold 
\begin{equation}\label{eq:tc}
	\tc=4\log d.
\end{equation}
}

We conclude this subsection with two useful lemmas. 
First, we show that if $\pmin$ is sufficiently high, then any support index receives a number of votes exceeding $\tc$ with high probability.
\begin{lemma}\label{lem:supp_prob}
	{Let $\pmin$ be the probability defined above, let $\Mc$ be the number of contacted machines, and let $\tc$ be the threshold in Eq. \eqref{eq:tc}.}
	If $\pmin\geq\frac{2\tc}{\Meff}$, then 
	\begin{equation}\label{eq:supp_prob}
		\Pr[\min_{k\in\indset} \v_k <\tc] \leq \frac{K}{d}.
	\end{equation}	
\end{lemma}
\begin{proof}
	Let $\delta=1-\frac{\tc}{\Meff\pmin}$. 
	By the Chernoff bound \eqref{eq:ChenoffBellow},
	\begin{equation}\label{eq:B_bound1}
	\Pr[\vmin < \tc]  = \Pr\left[\vmin <\Meff\pmin (1-\delta)\right] \leq  \exp\left( -\Meff\pmin\delta^2/2\right).
	\end{equation}
	The assumption $\pmin\geq\frac{2\tc}{\Meff}$ implies that $\delta>1/2$ and $\delta\Meff\pmin>\tc$.
	Thus
	\begin{equation}\label{eq:B_bound2}
	\Pr[\vmin < \tc]  \leq  \exp\left( -\tc/4\right) \leq 1/d ,
	\end{equation}
	where the last inequality follows {from Eq. \eqref{eq:tc}.}
	
	Now, fix $k\in\indset$. 
	By the independence of the noises in different machines $\v_k\sim Bin \left(\Meff,p_k \right) $. 
	By definition of $\mumin$, the coordinate $\mu_k\geq\mumin$ and thus $p_k\geq\pmin$. 
	Therefore,
	\[\Pr[\v_k < \tc]\leq \Pr[\vmin < \tc].\]
	{By \eqref{eq:B_bound2}, applying a union bound over $k\in S$ proves \eqref{eq:supp_prob}.}
\end{proof}

Next, let us consider the non-support coordinates. 
The following lemma shows that if $p_j$ is sufficiently low for each non-support index $j\notin\indset$, then no non-support index receives more than $\tc$ votes with high probability.
\begin{lemma}\label{lem:non_supp_prob}
	{Let $\Mc$ be the number of contacted machines and let $\tc$ be the threshold in Eq. \eqref{eq:tc}.}
	If for each $j\notin\indset$, the probability $p_j\leq\frac{\tc}{5\Meff}$, then 
	\[
	\Pr[\max_{j\notin\indset} \v_j >\tc] \leq \frac{1}{d}.
	\]
\end{lemma}
\begin{proof}
	The average number of messages at the fusion center containing index $j$ is $\mathbb{E}[\v_j] = \Meff p_j$. 
	Let \[
	\delta  = \frac{1}{\Meff p_j}\left(\tc - \Meff p_j \right) = \frac{\tc}{\Meff p_j} - 1 .
	\]
	The assumption $p_j\leq\frac{\tc}{5\Meff}$ implies that $\delta \Meff p_j =\tc-\Meff p_j \geq \frac{4\tc}{5}$ and $\delta \geq 4$, which in turn implies that $\delta/(2+\delta)\geq 2/3$.
	Note that for each $j\in[d]$ the random variables $I_{1,j},\dots,I_{M,j}$ are independent. By a Chernoff bound \eqref{eq:ChenoffAbove},
	\begin{equation}
	\Pr[\v_{j}>\tc]=\Pr\left[\v_{j}>\Meff p_{j}(1+\delta)\right]\leq\exp\left(-\frac{\delta}{2+\delta}\delta\Meff p_{j}\right)\leq\exp\left(-\frac{8\tc}{15}\right).
	\label{eq:v_j_M4}
	\end{equation}
	{By Eq. \eqref{eq:tc},} the above probability is at most $d^{-2}$. We conclude by applying a union bound,
	$\Pr[\max_{j\not\in\indset} \v_j > \tc] \leq (d-k)\Pr[\v_j > \tc] \leq 1/d.$
\end{proof}

\subsection{Proof of Theorem \ref{thm:top}}\label{sec:proof_thm_top}
        We begin by proving Theorem \ref{thm:top1} where $L=K=1$ and then outline the necessary changes in order to prove Theorem \ref{thm:topL} for $L\geq K\geq 1$.
        
        For future use, note that by definition of the Top-$L$ algorithm, the probability that machine $i$ sends a coordinate $k\in[d]$ is 
        \begin{equation}\label{eq:pk_topL}
        p_k=\Pr \left[\exists j_1,\dots,j_{d-L}\in [d]\setminus\left\lbrace k\right\rbrace   : \,\, x_{i,k}>x_{i,j_1},\dots,x_{i,j_{d-L}} \right].
        \end{equation}

        The communication of the Top-$L$ algorithm is $B= O\left( L\Meff \log d   \right) $ since the center sends one message to each participating machine indicating $L<d$, and each of these machines sends back exactly $L$ indices. 
        \paragraph{Proof of Theorem \ref{thm:top1}.}
        Without loss of generality, let the support index be $\indset=\left\lbrace 1\right\rbrace $. Thus, 
        \[p_{1}  =  \Pr\left[x_{i,1}>\max_{j>1}x_{i,j}\right].\]
        We show that w.h.p. both $\v_{1}>\tc$ and  $\v_{j}<\tc$ for all $j>1$.
        
        By the law of total probability and the independence of the random variables $\err_{i,j}$,
        \begin{eqnarray*}
        	\pmin & = & \Pr\left[\sqrt{2r\log d}+\err_{i,1}>\max_{j>1}\err_{i,j}\right]\\
        	& \geq & \Pr\left[\sqrt{2r\log d}+\err_{i,1}>\sqrt{2\log d}\;\;|\;\max_{j>1}\err_{i,j}<\sqrt{2\log d}\right]\cdot \Pr\left[\max_{j>1}\err_{i,j}<\sqrt{2\log d}\right]\\
        	& = & \Pr\left[\err_{i,1}>\left(1-\sqrt{r}\right)\sqrt{2\log d}\right]\cdot \Pr\left[\max_{j>1}\err_{i,j}<\sqrt{2\log d}\right].
        \end{eqnarray*}
        Recall that the random variables $\err_{i,j}$ are i.i.d. standard Gaussians.
        By Eq. \eqref{eq:Max_Gaussians}, 
        \[
        \Pr\left[\max_{j>1}\err_{i,j}<\sqrt{2\log d}\right]\geq e^{-1}.
        \]
        Therefore, by the Gaussian tail bound \eqref{eq:GaussianTailBoundUpper}, 
        \begin{equation}\label{eq:p1}
        \pmin\geq e^{-1}\cdot\frac{\left(1-\sqrt{r}\right)\sqrt{2\log d}}{\sqrt{2\pi}\left(2\left(1-\sqrt{r}\right)^{2}\log d+1\right)}d^{-\left(1-\sqrt{r}\right)^{2}}.
        \end{equation}
        Combining Eq. \eqref{eq:p1} with the bound \eqref{eq:M_0} implies that $\pmin	\geq	\frac{2\tc}{\Meff}$, and thus we can apply Lemma \ref{lem:supp_prob} and get that $\Pr\left[\v_{1}<\tc\right]\leq d^{-1}.$

        Now consider a non-support index $j>1$. By symmetry considerations, the probability that machine $i$ sends $j$ to the center is \[p_j = \frac{1-p_1}{d-1}.\]
        Recall that by definition of $\mumin$, the coordinate $\mu_k\geq\mumin$ and thus $p_1\geq\pmin$. Since for any strictly positive SNR $p_1>\pmin> \frac{1}{d}$, it follows that $p_j< \frac{1}{d}$ for each $j>1$.
        Hence, the expected number of votes for index $j$ is $\E \left[ \v_{j}\right] = \Meff p_j < \frac{\Meff }{d}$. Let $\delta = \frac{\tc}{\Meff p_{j}}-1$ and note that the assumption $M\leq d$ implies that $\Meff p_j\leq 1$  and hence $\delta\geq 4\log d -1 >0$ . By the Chernoff bound \eqref{eq:ChenoffAbove}, 
        \begin{eqnarray*}
                \Pr\left[\v_{j}>\tc\right] & = & \Pr\left[\sum_{i=1}^{\Meff}I_{i,j}>\left(1+\delta\right)\Meff p_{j}\right]\\
                & \leq & e^{-\frac{\delta^{2}\Meff p_{j}}{\delta+2}}=e^{-\frac{\left(4\log d-\Meff p_{j}\right)^{2}}{4\log d+\Meff p_{j}}}\\
                & = & e^{-4\log d-\Meff p_{j}+4\Meff p_{j}\frac{4\log d}{4\log d+\Meff p_{j}}}\leq e^{3}d^{-4}.
        \end{eqnarray*}
        By a union bound over all $d-1 $ non-support coordinates, 
        \[
        \Pr\left[\max_{j>1}\v_{j}>\tc\right]\leq\left(d-1\right)\cdot e^{3}d^{-4} \leq e^{3}d^{-3}.
        \]
        By an additional union bound on the two events, the algorithm outputs the correct support index with probability at least $1-d^{-1}-e^{3}d^{-3}$.
\qed

\paragraph{Proof of Theorem \ref{thm:topL}.}
        The proof is similar to that of Theorem \ref{thm:top1}, with the following changes.
        For any threshold $a\in \R$, the probability $p_k$ that $k\in \indset$ is sent to the fusion center is lower bounded by 
        \begin{eqnarray*}
        	p_{k} & \geq & \Pr\left[x_{i,k}>a\,\,,\,\,\sum_{j\notin\indset}\mathds{1}\left\{ x_{i,j}>a\right\} \leq L-K\right]\\
        	& = & \Pr\left[\err_{i,k}>a-\mu_{k}\,\,,\,\,\sum_{j\notin\indset}\mathds{1}\left\{ \err_{i,j}>a\right\} \leq L-K\right].
        \end{eqnarray*}
        Set $a=a\left(K,L,d\right)$ and $ b=b\left(K,L,d,r \right)$ by Eqs. \eqref{eq:a} and \eqref{eq:b} respectively.
        Recall that $\err_{i,j}$ are i.i.d. for all $i\in \left[ \Meff \right] $ and $j\in \left[ d\right] $, i.e., the two events in the probability above are independent of each other. Combining this with the definition of $\mumin$ yields
        \begin{equation}\label{eq:p_k_1}
        \pmin\geq\Phi^c(b)\cdot\Pr\left[\sum_{j\notin\indset}\mathds{1}\left\{ \Z_{j}>a\right\} \leq L-K\right],
        \end{equation}
        where $\Z_j\sim\Nor\left( 0,1\right) $.
		
		We begin by bounding the first term of Eq. \eqref{eq:p_k_1}.
		If $b\leq 0$ then $\Phi^c(b)\geq 1/2$.
		Otherwise, by the Gaussian tail bound \eqref{eq:GaussianTailBoundUpper}, 
		\[\Phi^c(b)\geq\frac{b}{\sqrt{2\pi}\left(b^{2}+1\right)}\left(d-K\right)^{-\left(\sqrt{1-\frac{\log\left(L-K+1\right)}{\log\left(d-K\right)}}-\sqrt{r}\right)^{2}}.\]

		Next, we show that with probability $\geq \frac{1}{4}$ the number of non-support indices that pass the threshold $a$ is upper bounded by $L-K$. 
        Denote by $p_a$ the probability that a standard normal random variable passes the threshold $a$, i.e., $p_a\equiv \Phi^c(a)$. 
		By Eq. \eqref{eq:GaussianTailBoundUpper}, $p_a$ is upper bounded by
		\begin{equation}\label{eq:p_a}
		p_a\leq\frac{1}{\sqrt{2\pi}a}\cdot\frac{L-K+1}{d-K}.
		\end{equation}
        Next, let $\delta=\frac{L-K+1}{ p_a\left( d-K\right)}-1$. Note that the assumption $K\leq L<(d-K)/2$ implies that $\sqrt{2\pi}a\geq \sqrt{4\pi\log 2}>1$, and thus $\delta>0$.
        By the Chernoff bound \eqref{eq:ChenoffAbove}, 
        \begin{eqnarray*}
        	\Pr\left[\sum_{j\notin\indset}\mathds{1}\left\{ \Z_{j}>a\right\} \geq L-K+1\right] & = & \Pr\left[\sum_{j\notin\indset}\mathds{1}\left\{ \Z_{j}>a\right\} \geq\left(1+\delta\right)p_{a}\left(d-K\right)\right]\\
        	& \leq & e^{-\delta^{2}p_{a}\left(d-K\right)/\left(2+\delta\right)}\\
        	& = & e^{-\left(\frac{L-K+1}{p_{a}\left(d-K\right)}-1\right)^{2}\frac{p_{a}\left(d-K\right)}{1+\frac{L-K+1}{p_{a}\left(d-K\right)}}}\\
        	& = & e^{-\frac{\left(L-K+1-p_{a}\left(d-K\right)\right)^{2}}{L-K+1+p_{a}\left(d-K\right)}}.
        \end{eqnarray*}
        For $A_1,A_2>0$ the function $e^{-\frac{\left(A_1-A_2\right)^{2}}{A_1+A_2}}$ is monotonically increasing in $A_2$. Letting $A_1=L-K+1$ and $A_2=p_{a}\left(d-K\right)$, we can now apply the upper bound on $A_2$ in Eq. \eqref{eq:p_a} to the equation above. Thus the complementary probability, i.e., the second term in Eq. \eqref{eq:p_k_1}, can be lower bounded as follows
		\begin{eqnarray*}
			\Pr\left[\sum_{j\notin\indset}\mathds{1}\left\{ \Z_{j}>a\right\} \leq L-K\right] & \geq & 1-e^{-\left(L-K+1\right)\frac{\left(1-1/\sqrt{2\pi}a\right)^{2}}{1+1/\sqrt{2\pi}a}}\\
			& \geq & 1-e^{-\frac{\left(1-\sqrt{4\pi\log2}\right)^{2}}{1+\sqrt{4\pi\log2}}}\geq\frac{1}{4},
		\end{eqnarray*}
        where the second inequality follows from the assumption $K\leq L<(d-K)/2$.

        By Eq. \eqref{eq:M_K_L} the probability $\pmin \geq  \frac{2\tc}{\Meff}$ and thus $\Pr\left[\min_{k\in\indset}\v_{k}<\tc\right]\leq Kd^{-1}$ by Lemma \ref{lem:supp_prob}. 
        Let $W_i\sim Bin \left( K, \pmin   \right) $ be a binomial random variable that serves as a lower bound for how many of the support coordinates machine $i$ sends to the center.
        By the law of total probability and symmetry of the non-support indices, the probability that machine $i$ sends to the center a non-support index $j\notin \indset $ is 
        \begin{eqnarray}\label{eq:pj_topL}
                p_{j} & \leq & \sum_{n=0}^{K}\Pr\left[i\in\V_{j}|W_{i}=n\right]\cdot\Pr\left[W_{i}=n\right]\nonumber\\
                & = & \sum_{n=0}^{K}\frac{L-n}{d-K}\Pr\left[W_{i}=n\right]=\frac{L-Kp_{\min}}{d-K}\leq\frac{L}{d-K}.
        \end{eqnarray}
        Using the requirement $\MKL\leq \frac{d-K}{L}$, the rest of the proof continues in the same manner.
\qed

\subsection{Proof of Theorem \ref{thm:th}}\label{sec:proof_thm_th}
        Note that we set the precision parameters $P,U$ such that $\tm-1/d\leq\htm\leq\tm$.
        By definition of the thresholding algorithm, the probability that machine $i$ sends a support coordinate $k\in\indset$ is 
        \begin{equation}\label{eq:pk_th}
        p_k=\Pr \left[ x_{i,k}>\htm \right]\geq\Pr \left[ \err_{i,k}>\tm-\mu_k \right].
        \end{equation}
        Thus, for the extreme case $\mu_k=\mumin$,
        \begin{equation}\label{eq:pmin}
        \pmin\geq\Phi^c\left( \tm-\mumin\right). 
        \end{equation} 
        For a non-support coordinate $j\notin\indset$, the Gaussian tail bound \eqref{eq:GaussianTailBoundUpper} implies that 
	    \begin{equation}\label{eq:pj_th}
	    p_{j}=\Pr\left[\err_{i,j}>\htm\right]\leq\frac{e^{-\htm^{2}/2}}{\sqrt{2\pi}\htm}\leq e^{\tm/d}\frac{e^{-\tm^{2}/2}}{\sqrt{2\pi}(\tm-1/d)}.
	    \end{equation}
	    
	    In terms of communication, each coordinate $j\in[d]$ appears in $\Meff p_j$ messages on average. In addition, in the setup stage the fusion center sends $\Meff$ messages with the truncated threshold $\htm$, whose binary representation is $O(\log d)$ bits long. Hence the average total communication is
	    \begin{equation}\label{eq:B_th}
	    \E\left[B\right]=O\left(\Meff\log d+\left(\sum_{k\in\indset}p_{k}+\sum_{j\notin\indset}p_{j}\right)\Meff\log d\right) = O\left(\left(K+\sum_{j\notin\indset}p_{j}\right)\Meff\log d\right) ,
	    \end{equation}
		where the last step follows from the trivial bound $p_k\leq 1$ for each $k\in\indset$.
		
		We now proceed to proving the sub-theorems. 

		\paragraph{Proof of Theorem \ref{thm:th_logd}.}

		By Eqs. \eqref{eq:tm_logd} {and \eqref{eq:pmin}},
		\begin{equation}\label{eq:p1_th_logd}
		\pmin\geq\Phi^c\left( 0\right)=\frac{1}{2}.
		\end{equation}
		{Since $\Meff=\left\lceil 16\log d \right\rceil$ and by Eq. \eqref{eq:tc},} we have that $\pmin\geq 2\tc/\Meff$, and thus $\Pr[\min_{k\in\indset} \v_k <\tc] \leq \frac{K}{d}$ by Lemma \ref{lem:supp_prob}.
		Now fix $j\notin\indset$.
		Due to the assumptions $d\geq 16$ and $r>\frac{\log 5}{\log \left( d-K\right) }$, by Eq. \eqref{eq:pj_th} we have that \[p_{j}\leq\frac{e^{\sqrt{2r\log\left(d-K\right)}/d}\left(d-K\right)^{-r}}{\sqrt{2\pi}\left(\sqrt{2r\log\left(d-K\right)}-1/d\right)}\leq\frac{e^{\sqrt{2\log5}/16}}{10\left(\sqrt{\pi\log5}-1/16\right)}\leq\frac{\tc}{5\Mc}.\] 
		Applying Lemma \ref{lem:non_supp_prob} yields
		$
		\Pr[\max_{j\not\in\indset} |\v_j > \tc] \leq 1/d.
		$
		
		Finally, the average total communication follows from inserting the expressions for $p_j$ and $\Meff$ into Eq. \eqref{eq:B_th}.

		\paragraph{Proof of Theorem \ref{thm:th_M}.}
        Note that the bound $M>\sqrt{e}\cdot 32\sqrt{\pi}\log^{1.5} d$ implies that $2\log{\frac{M}{32\sqrt{\pi}\log^{1.5} d}}\geq 1$. By the expression \eqref{eq:tm} for $\tm$ and the Gaussian tail bound \eqref{eq:GaussianTailBoundLower},
        \begin{eqnarray}\label{eq:pk1}
                \pmin & \geq & \Phi^c\left( \sqrt{2\log\frac{M}{32\sqrt{\pi}\log^{1.5}d}}\right)\geq\frac{1}{2\sqrt{2\pi}\sqrt{2\log\frac{M}{32\sqrt{\pi}\log^{1.5}d}}}e^{-\log\frac{M}{32\sqrt{\pi}\log^{1.5}d}}\nonumber\\
                & = & \frac{1}{2}\cdot\frac{16\log d}{M}\sqrt{\frac{\log d}{\log\frac{M}{32\sqrt{\pi}\log^{1.5} d}}}\geq\frac{2\tc}{M},
        \end{eqnarray}
        where the last inequality follows from the upper bound on $M$.
        Thus, by Lemma \ref{lem:supp_prob}, $\min_{k\in\indset} \v_k < \tc$ with probability at most $K/d$. 
		Due to Assumption \eqref{eq:r_th_M}, $\htm\geq \sqrt{2\log\frac{5M}{\sqrt{2\pi}4\log d}}$, and thus by the first inequality of Eq. \eqref{eq:pj_th},
        \begin{equation}
        p_{j}\leq\frac{1}{\sqrt{2\pi}\sqrt{2\log\frac{5M}{\sqrt{2\pi}4\log d}}}e^{-\log\frac{5M}{\sqrt{2\pi}4\log d}}=\frac{4\log d}{5M\sqrt{2\log\frac{5M}{\sqrt{2\pi}4\log d}}}<\frac{\tc}{5M},
        \end{equation}
        where the last inequality follows from {Eq. \eqref{eq:tc}} and the condition on $M$.
        Thus by Lemma \ref{lem:non_supp_prob}, $\Pr[\max_{j\not\in\indset} \v_j > \tc]  \leq 1/d.$   
        
        Towards computing the expected communication of the algorithm, we bound $p_j$ more carefully using the second inequality of Eq. \eqref{eq:pj_th},
        \begin{eqnarray}
        	p_{j} & \leq & \frac{e^{\left(\sqrt{2r\log\left(d-K\right)}+\sqrt{2\log\frac{M}{32\sqrt{\pi}\log^{1.5}d}}\right)/d}\left(d-K\right)^{-r}e^{-2\sqrt{r\log\left(d-K\right)\log\frac{M}{32\sqrt{\pi}\log^{1.5}d}}}}{\sqrt{2}\left(\sqrt{2r\log\left(d-K\right)}+\sqrt{2\log\frac{M}{32\sqrt{\pi}\log^{1.5}d}}-\frac{1}{d}\right)}\cdot\frac{32\log^{1.5}d}{M}\nonumber\\
        	& \leq & \left(d-K\right)^{-r}e^{-2\sqrt{r\log\left(d-K\right)\log\frac{M}{32\sqrt{\pi}\log^{1.5}d}}}\cdot\frac{32\log^{1.5}d}{M},\label{eq:pj_th2}
        \end{eqnarray}
        where the second inequality follows from bounding $\sqrt{2r\log\left(d-K\right)}+\sqrt{2\log\frac{M}{32\sqrt{\pi}\log^{1.5}d}}-\frac{1}{d}>1$ and from $d\geq 15$, which implies that $e^{\left(\sqrt{2r\log\left(d-K\right)}+\sqrt{2\log\frac{M}{32\sqrt{\pi}\log^{1.5}d}}\right)/d}<\sqrt{2}$.
        By inserting Eq. \eqref{eq:pj_th2} into Eq. \eqref{eq:B_th}, the expected communication of the algorithm is 
        \[\E\left[B\right]=O\left(KM\log d+\left(d-K\right)\cdot{\left(d-K\right)^{-r}e^{-2\sqrt{r\log\left(d-K\right)\log\frac{M}{32\sqrt{\pi}\log^{1.5}d}}}}\frac{\log^{1.5}d}{M}\cdot M\log d\right).\]
        Rearranging completes the proof.
		\qed

	\paragraph{Proof of Theorem \ref{thm:th_large_M}.}
	Recall that $\Mc$ and $\tm$ are given by Eqs. \eqref{eq:Mlarge} and \eqref{eq:tm_Mlarge}, respectively.
	By the Gaussian tail bound \eqref{eq:GaussianTailBoundUpper},
	\begin{eqnarray}\label{eq:pk1_largeM}
	\pmin & \geq & \Phi^c\left(\left(1-\sqrt{r}\right)\sqrt{2\log\left(d-K\right)}\right)\nonumber\\
	& \geq & \frac{\left(1-\sqrt{r}\right)\sqrt{2\log\left(d-K\right)}}{\sqrt{2\pi}\left(\left(1-\sqrt{r}\right)^{2}2\log\left(d-K\right)+1\right)}\left(d-K\right)^{-\left(1-\sqrt{r}\right)^{2}}=\frac{2\tc}{\Meff}.
	\end{eqnarray}
	Thus by Lemma \ref{lem:supp_prob}, $\min_{k\in\indset} \v_k < \tc$ with probability at most $K/d$.

	Fix a non-support index $j\notin \indset$. 
	Note that the assumption $d-K\geq 20$ implies that $e^{\sqrt{2\log\left(d-K\right)}/d}<\sqrt{2}$. Thus, by Eq. \eqref{eq:pj_th},
	\begin{equation}\label{eq:pj_Mlarge}
	p_{j}\leq\frac{e^{\sqrt{2\log\left(d-K\right)}/d}}{\sqrt{2\pi}\left(\sqrt{2\log\left(d-K\right)}-1/d\right)}\left(d-K\right)^{-1}\leq\frac{1}{\sqrt{\pi}\left(\sqrt{2\log\left(d-K\right)}-1/d\right)}\left(d-K\right)^{-1}.
	\end{equation}
	It is easy to verify that the aforementioned assumption and the condition $r> \left( \frac{\log 10}{ \log \left( d-K\right) }\right) ^2$, or equivalently, $d^{2\sqrt{r}}\geq 100$, imply that 
	$p_{j}<\frac{\tc}{5\Meff}.$
	Thus the desired bound 
	$
	\Pr[\max_{j\not\in\indset} \v_j > \tc]  \leq 1/d
	$ 
	follows from Lemma \ref{lem:non_supp_prob}.
	
	By inserting Eq. \eqref{eq:pj_Mlarge} into Eq. \eqref{eq:B_th}, the expected communication of the algorithm is 
	\[\E\left[B\right]=O\left(K\Meff \log d+\left(d-K\right)\cdot\frac{1}{\sqrt{2\pi}\sqrt{2\log\left(d-K\right)}}\left(d-K\right)^{-1}\cdot \Meff\log d\right).\]
	Using Eq. \eqref{eq:Mlarge} concludes the proof. 
	\qed

\subsection{Proof of Theorem \ref{thm:subGaus}}\label{app:sub_gauss}
{Towards proving Theorem \ref{thm:subGaus}, we first recall the definition of sub-Gaussian random variables and cite a couple of useful results.}
\begin{definition}[{\citep[Definition 1.2]{rigollet201518}}]
	{A random variable $X\in\R$ is said to be sub-Gaussian with sub-Gaussian parameter (variance proxy) $\vp^2>0$ if $\E\left[ X\right] =0$ and its moment generating function satisfies $\E\left[ \exp(sX)\right] \leq \exp\left(\vp^2s^2/2 \right) $ for all $s\in\R$. We write $X\sim\subG(\vp^2)$.}
\end{definition}

{Let $X_1,\dots,X_n\sim\subG\left( \vp^2\right) $ be i.i.d. sub-Gaussian random variables. We denote the variance of each r.v. by $\subvar^{2}=E\left[X_{j}^{2}\right]$ and their third absolute moment by $\ThM=E|X_{j}|^{3}$, and assume that it is finite $\ThM<\infty$. 
In our proofs for the Gaussian noise case, we used the upper and lower bounds on tail probabilities described in Lemma \ref{lem:GaussianTailBounds}.
We now show that similar bounds hold for sub-Gaussian noise. }

{To prove tail lower bounds, we need the following result.}
\begin{thm}[{\cite[Corollary 3]{nagaev2002lower}}]
	\label{thm:Nagaev}
	{Let $X_1,\dots,X_n\sim\subG\left( \vp^2\right) $ be i.i.d. sub-Gaussian random variables as defined above and let $\tilde{X}=\frac{1}{\subvar\sqrt{n}}\sum_{i=1}^n X_i$.
	If $1.7<t\leq \frac{1}{25}\sqrt{n}\frac{\subvar^3}{\ThM}$, then
	\begin{equation}\label{eq:Nagaev}
	\Pr\left[ \tilde{X} > t  \right] > \Phi^c(t)\exp\left(-\frac{(2.35\ThM /\subvar^3+0.2)t^3}{\sqrt{n}} \right) \left(1- \frac{(16.88\ThM /\subvar^3+6.58)t}{\sqrt{n}}\right).  
	\end{equation}}
\end{thm}

{To prove tail upper bounds, we use the following lemma, which is an easy corollary of Lemma 1.3 and Corollary 1.7 of \citet{rigollet201518}.}
\begin{lemma}\label{lem:tildeX}
	{Let $X_1,\dots, X_n \sim\subG\left( \vp^2\right)$ be i.i.d. sub-Gaussian random variables as defined above and let $\tilde{X}=\frac{1}{\subvar\sqrt{n}}\sum_{i=1}^n X_i$. Then, $\tilde{X}\sim \subG(\vp^2/\subvar^2)$, and there exists a constant $c>0$ such that for any $t>0$,
	\begin{equation}\label{eq:sub_upp_tail}
	\Pr \left[ \tilde{X}>t \right] \leq \exp \left( -c\subvar^2t^2/\vp^2\right) . 	
	\end{equation}
	}
\end{lemma}
{We now proceed to prove the theorem.}

\paragraph{Proof of Theorem \ref{thm:subGaus}.}
{The proof is similar to that of Theorem \ref{thm:th_large_M}, with the following changes, pertaining to the probabilities of sending the support and non-support indices, i.e., Eqs. \eqref{eq:pk1_largeM} and \eqref{eq:pj_Mlarge}.}
{Let $t=(1-\sqrt{r})\sqrt{2\log (d-K)}$ and let the number of samples in each machine satisfy Eq. \eqref{eq:condition_n} as follows,
\begin{equation}\label{eq:n}
n\geq  \frac{\left(2.35\ThM /\subvar^3+0.2 \right)^2 }{\log^2 2}t^6=\frac{8\left(2.35\ThM /\subvar^3+0.2 \right)^2 }{\log^2 2}(1-\sqrt{r})^6\log^3 (d-K).
\end{equation}
Further assume that $1.7<(1-\sqrt{r})\sqrt{2\log (d-K)}\leq \frac{1}{25}\sqrt{n}\frac{\subvar^3}{\ThM}$. It is easy to verify that for the values of $n$ in Eq. \eqref{eq:n}, the SNR $r$ satisfies the left inequality in \eqref{eq:r_cond}.
We begin with applying Theorem \ref{thm:Nagaev} to lower bound the probability that machine $i$ sends a support index $k\in\indset$.
The first term of Eq. \eqref{eq:Nagaev}, i.e. $\Phi^c(t)$, is identical to that of Eq. \eqref{eq:pk1_largeM}.
By Eq. \eqref{eq:n}, the second term is at least $1/2$. By Jensen's inequality, $\ThM /\subvar^3\geq 1$. Thus, the third term is 
\[
1-(16.88\ThM /\subvar^3+6.58)\frac{t}{\sqrt{n}}> 1-\frac{4\log 2}{(1-\sqrt{r})^2\log (d-K)} ,
\]
which is also greater than $ \frac{1}{2} $ for sufficiently large $d-K$.
Hence, Theorem \ref{thm:Nagaev} implies that 
\begin{equation}
\Pr\left[ \tilde{\err}_{i,k} > (1-\sqrt{r})\sqrt{2\log (d-K)}  \right] > \frac{1}{4}\Phi^c\left( (1-\sqrt{r})\sqrt{2\log (d-K)}\right).
\end{equation}
Let 
\begin{equation}\label{eq:M_sub}
\Mc=\left\lceil  4\cdot\frac{ 8\sqrt{2\pi}\left(\left(1-\sqrt{r}\right)^{2}2\log\left(d-K\right)+1\right)}{\left(1-\sqrt{r}\right)\sqrt{2\log\left(d-K\right)}}\left(d-K\right)^{\left(1-\sqrt{r}\right)^{2}}\log d\right\rceil.
\end{equation}
Then, Eq. \eqref{eq:pk1_largeM}
can be replaced with
\begin{equation}\label{eq:pmin_sub}
\pmin > \frac{1}{4}\cdot \frac{\left(1-\sqrt{r}\right)\sqrt{2\log\left(d-K\right)}}{\sqrt{2\pi}\left(\left(1-\sqrt{r}\right)^{2}2\log\left(d-K\right)+1\right)}\left(d-K\right)^{-\left(1-\sqrt{r}\right)^{2}} =\frac{2\tc}{\Mc}.
\end{equation}
The bound $\Pr \left[  \min_{k\in\indset} \v_k < \tc\right] \leq K/d$ follows from Lemma \ref{lem:supp_prob}.}

{As for the non-support indices, by Lemma \ref{lem:tildeX} the r.v. $\tilde{\err}_{i,j} \sim \subG(\vp^2/\subvar^2)$. Thus, by the tail bound \eqref{eq:sub_upp_tail} and by Condition \eqref{eq:lambda},
\begin{equation*}
p_j = \Pr \left[\tilde{\err}_{i,j}>\htm \right] \leq \exp\left(-c\subvar^2\htm^2/\vp^2 \right) \leq \exp\left(-c\lambda^2\htm^2 \right).
\end{equation*}
As in the proof of Theorem \ref{thm:th_large_M}, recall that the truncated threshold satisfies that 
$\htm>\tm-1/d$. Hence, 
\begin{equation*}
p_j \leq \exp\left(-c\lambda^2\tm^2 \right) \exp\left(2c\lambda^2\tm/d \right). 
\end{equation*}
Next, we insert into the right hand side above the value of $\tm$, Eq. \eqref{eq:tm_sub}. 
For sufficiently large $d-K$, the second term above is bounded by say 2. Hence,  
\begin{equation}\label{eq:pj_sub}
p_j \leq O\left( (d-K)^{-2c\lambda^2}\right)  . 
\end{equation}
Condition \eqref{eq:r_cond} with a constant $C>2c$ implies that for sufficiently large $d-K$, 
as in the original proof, $p_j\leq \frac{\tc}{5\Mc}$ where $t_c$ is given
in Eq. \eqref{eq:tc}.
Thus the desired bound 
$
\Pr[\max_{j\not\in\indset} \v_j > \tc]  \leq 1/d
$ 
follows from Lemma \ref{lem:non_supp_prob}.
The communication bound \eqref{eq:th_comm_sub} follows directly from Eqs. \eqref{eq:pj_sub} and \eqref{eq:M_sub}.}

\subsection{Proof of Corollary \ref{cor:ell_2}}\label{app:cor_ell_2}

We first analyze the total communication cost of $\Pi$. 
Each machine $i$ sends a message $\w_i$ consisting of the truncated binary representations of $x_{i,k}$ for $k\in\hat{\indset}$. Recall that the length of each $\w_{i,k}$ is $P+U+2$ bits. Since $P,U=O(\log d)$, the expected total communication cost of $\Pi$ is $O\left( KM\log d \right) $.

Let $\hat{\muv}$ be the output of protocol $\Pi$, and recall that  $\hat{\mu}_j=\bar{z}_{j}\cdot \mathds{1}\left\lbrace j\in\hat{\indset} \right\rbrace $.
By linearity of expectation and the law of total probability, 
\begin{equation}\label{eq:cor1}
\E\left[\left\Vert \muv-\hat{\muv}\right\Vert _{2}^{2}\right]  = \sum_{j\in\left[d\right]}\E\left[\left(\mu_{j}-\hat{\mu}_{j}\right)^{2}\right]=\sum_{j\in\left[d\right]}\left(\E\left[\left(\mu_{j}-\bar{z}_{j}\right)^{2}\right]\Pr\left[j\in\hat{\indset}\right]+\mu_{j}^{2}\Pr\left[j\notin\hat{\indset}\right]\right).
\end{equation}
We now bound each of the terms in the RHS.

Fix $j\in[d]$. Since $\E\left[\bar{x}_{j}\right]=\mu_{j}$, it follows that
\[
\E\left[\left(\mu_{j}-\bar{z}_{j}\right)^{2}\right]=\E\left[\left(\mu_{j}-\bar{x}_{j}+\bar{x}_{j}-\bar{z}_{j}\right)^{2}\right]=\E\left[\left(\mu_{j}-\bar{x}_{j}\right)^{2}\right]+\E\left[\left(\bar{x}_{j}-\bar{z}_{j}\right)^{2}\right].
\]
Furthermore, since the noise in different machines is i.i.d.,
\[
\E\left[\left(\mu_{j}-\bar{x}_{j}\right)^{2}\right]=\frac{1}{M}\E\left[\left(\mu_{j}-x_{i,j}\right)^{2}\right]=\frac{1}{M},
\]
and 
\[
\E\left[\left(\bar{x}_{j}-\bar{z}_{j}\right)^{2}\right]=\frac{1}{M}\E\left[\left(x_{i,j}-z_{i,j}\right)^{2}\right]
\]
for any fixed $i\in[M]$.

We now bound $\E\left[\left(x_{i,j}-z_{i,j}\right)^{2}\right]$ for any fixed $j\in[d]$ and $i\in[M]$.
Since $x_{i,j}\sim \Nor (\mu_j,1)$ and $z_{i,j}$ is a deterministic function of it $z_{i,j}(x_{i,j})$, then
\[\E\left[\left(x_{i,j}-z_{i,j}\right)^{2}\right]  =  \int_{-\infty}^{\infty}\left(x-z\left(x\right)\right)^{2}\frac{\exp\left(-\left(x-\mu_{j}\right)^{2}/2\right)}{\sqrt{2\pi}}dx.\]
If $x< 2^{U+1}$, then the truncation step of the protocol implies that the remainder is bounded such that $|x-z|\leq 2^{-P}.$ Otherwise, the value $x$ is higher than the range that is representable using $U+1$ bits before the binary dot, and thus the magnitude $|x-z|$ can be as large as $|x|$ itself. 
Therefore,
\[
\E\left[\left(x_{i,j}-z_{i,j}\right)^{2}\right]\leq2^{-P}+2\int_{2^{U+1}}^{\infty}x^{2}\frac{\exp\left(-\left(x-\mu_{j}\right)^{2}/2\right)}{\sqrt{2\pi}}dx.
\]
Using integration by parts,
\[
\int_{2^{U+1}}^{\infty}x^{2}\frac{e^{-\left(x-\mu_{j}\right)^{2}/2}}{\sqrt{2\pi}}dx\leq\frac{1}{\sqrt{2\pi}}\left(2^{U+1}+\mu_{j}\right)e^{-\left(2^{U+1}-\mu_{j}\right)^{2}/2}+\left(1+\mu_{j}^{2}\right)\int_{2^{U+1}}^{\infty}\frac{e^{-\left(x-\mu_{j}\right)^{2}/2}}{\sqrt{2\pi}}dx.
\]
By the Gaussian tail bound \ref{eq:GaussianTailBoundUpper},
\begin{eqnarray*}
	\int_{2^{U+1}}^{\infty}x^{2}\frac{e^{-\left(x-\mu_{j}\right)^{2}/2}}{\sqrt{2\pi}}dx & \leq & \frac{1}{\sqrt{2\pi}}\left(2^{U+1}+\mu_{j}+\frac{1+\mu_{j}^{2}}{2^{U+1}-\mu_{j}}\right)e^{-\left(2^{U+1}-\mu_{j}\right)^{2}/2}\\
	& \leq & \frac{1}{\sqrt{2\pi}}\left(\sqrt{4\left(\gamma+1\right)\log d}+2d^{\gamma}+\frac{1+d^{2\gamma}}{\sqrt{4\left(\gamma+1\right)\log d}}\right)d^{-2\left(\gamma+1\right)}\leq\frac{1}{2d},
\end{eqnarray*}
where the second inequality follows from the bound $\mumax<d^{\gamma}$ and the selection $U$, and the last inequality holds for all $d\geq 5$ and $\gamma\geq 0$.
Finally, Since $P=\left\lceil \log_2 d \right\rceil$,
\[
\E\left[\left(x_{i,j}-z_{i,j}\right)^{2}\right]\leq d^{-2}+d^{-1}.\]
In addition, since $\E\left[ |\hat{\indset}|\right] =K$, the sum $\sum_{j\in\left[d\right]}\Pr\left[j\in\hat{\indset}\right]=K,$
and thus the first term in the RHS of Eq. \eqref{eq:cor1} is bounded by $\frac{K}{M}\left(1+ d^{-2}+d^{-1}\right) $.

It remains to prove that for each support index $k\in\indset$, $$\mu_{k}^{2}\Pr\left[k\notin\hat{\indset}\right]\leq 2\mumin^2/d.$$

	Denote by $G$ the "good" event that each non-support index $j\notin\indset$ receives less than $\tc=4\log d$ votes. 
	Fix $k\in\indset$. 
	By the law of total probability,
	\[
	\Pr\left[k\notin\hat{\indset}\right]\leq\left(\Pr\left[k\notin\hat{\indset}|G\right]\Pr\left[G\right]+\left(1-\Pr\left[G\right]\right)\right).
	\]
	Conditioned on $G$, the index $k\in \hat{\indset}$ if $ \v_k > \tc$. The complementary probability can be bounded by Chernoff \eqref{eq:ChenoffBellow},
	\[
	\Pr\left[k\notin\hat{\indset}|G\right]\leq\Pr\left[\v_{k}<\tc\right]\leq e^{-\frac{1}{2}\left(\Meff p_{k}-\tc\right)\left(1-\frac{\tc}{\Meff p_{k}}\right)}.
	\]
	Recall that under the conditions of Theorem \ref{thm:topL}, $p_j\leq \frac{\tc}{5\Meff}$ for each non-support index $j\notin\indset$. Therefore $\Pr\left[ G\right] \geq 1-1/d$ by Lemma \ref{lem:non_supp_prob}.	
	Thus,
	\begin{equation}\label{eq:func_to_min}
	\mu_{k}^{2}\Pr\left[k\notin\hat{\indset}\right]\leq\mu_{k}^{2}\left(e^{-\frac{1}{2}\left(\Meff p_{k}-\tc\right)\left(1-\frac{\tc}{\Meff p_{k}}\right)}\left(1-1/d\right)+1/d\right).
	\end{equation}
	In addition, recall that $p_k$ is defined by Eq. \eqref{eq:pk_topL} for the Top-$L$ algorithm (or by Eq. \eqref{eq:pk_th} for the thresholding algorithm), and decays exponentially with $\mu_k$. 
	Therefore, the right hand side of Eq. \eqref{eq:func_to_min} is monotonically {decreasing} in $\mu_k$, and thus upper bounded by 
	\[
	\mumin^{2}\left(e^{-\frac{1}{2}\left(\Meff \pmin-\tc\right)\left(1-\frac{\tc}{\Meff\pmin}\right)}\left(1-1/d\right)+1/d\right).
	\] 
	Recall that the assumption $\pmin\geq\frac{2\tc}{\Meff}$ also holds under the conditions of Theorem \ref{thm:topL}. Thus we can apply Eq. \eqref{eq:B_bound2} and get the desired bound
	\[
	\mu_{k}^{2}\Pr\left[k\notin\hat{\indset}\right]\leq\mumin^{2}\left(\frac{1}{d}\left(1-1/d\right)+1/d\right)\leq\frac{2\mumin^{2}}{d}.
	\]

We now turn to proving the last part of the corollary. 
The lower bound in Condition \eqref{eq:SNR_regime} implies that $\mumin>1/\sqrt{M}$. Thus, by Eq. \eqref{eq:R_oracle}, the oracle risk is $R_{\oracle}\left( \muv\right) =K/M$.
In addition, the upper bound in Condition \eqref{eq:SNR_regime} implies that $\mumin^2\leq 2\log d$. 
Taking $K,M,d\to\infty$ with $\frac{KM\log d}{d}\to 0$ yields the desired result.

\subsection{Proof of Corollary \ref{cor:Shamir_sub}}\label{app:shamir}
{The proof is similar to that of Theorem 5 of \cite{shamir2014fundamental}. The main step of the proof, which is proved in Shamir's Theorem 6, is deriving an upper bound on the probability of detecting the special coordinate $j$ for some "hard" distribution. It remains to prove that this distribution indeed satisfies the conditions specified in Corollary \ref{cor:Shamir_sub}.}

{	Within the proof of his Theorem 5, \cite{shamir2014fundamental} defined the following problem, which he referred to as hide-and-seek 2.}
	\begin{definition}[Hide-and-seek Problem 2]
		{Let $0<\rho<\frac{1}{2}$. Consider the set of distributions $\left\lbrace \Pr_j (\cdot)\right\rbrace_{j=1}^d $ over $\left\lbrace -\bm{e}_i,+\bm{e}_i\right\rbrace_{i=1}^d$, defined as
		\[
		\Pr_{j}\left(\bm{e}_{i}\right)=\begin{cases} 
			\frac{1}{2d} & i\neq j\\
			\frac{1}{2d}+\frac{\rho}{d} & i=j
		\end{cases}\quad\quad\Pr_{j}\left(-\bm{e}_{i}\right)=\begin{cases}
			\frac{1}{2d} & i\neq j\\
			\frac{1}{2d}-\frac{\rho}{d} & i=j
		\end{cases}.
		\]
		Given an i.i.d. sample of $Mn$ instances generated from $\Pr_j (\cdot)$, where $j$ is unknown, detect $j$.}
	\end{definition}
	{Let $\u\in\R^d$ be a random vector sampled from $\Pr_j (\cdot)$. 
	We now verify that it satisfies the conditions specified in Corollary \ref{cor:Shamir_sub}. 
	\begin{enumerate}
		\item By construction, there exists $j$ for which $\E[u_j]=2\rho/d$, whereas $\E[u_i]=0$ for all other coordinates $i\neq j$. Thus, the first condition holds for $\tau=2\rho/d>0$.
		\item For each coordinate $i\in[d]$, the value $u_i^2=1$ with probability $1/d$ and $0$ otherwise, and thus $\E[u_i^2]=1/d$.
		\item For each coordinate $i\in[d]$, the random variable $\left( u_i-\E\left[u_i \right]\right) $ equals $+1$ w.p. $1/2d$, $-1$ w.p. $1/2d$, and $0$ w.p. $1-1/d$. Its absolute values are bounded by $1$, and thus it is sub-Gaussian with parameter $1$.
	\end{enumerate}
	Shamir proved the following theorem, which bounds the success probability of detecting $j$.}
	\begin{thm}[{\cite[Theorem 6]{shamir2014fundamental}}]
		{Consider the hide-and-seek problem 2 on $d > 1$ coordinates, with some bias $\rho\leq \min \left\lbrace \frac{1}{27},\frac{1}{9\log d}, \frac{d}{14n} \right\rbrace $ and sample size $Mn$. Then for any estimate $\hat{J}$ of the biased coordinate returned by any $(b,n,M)$ protocol, there exists some coordinate $j$ such that
		\begin{equation}
			\Pr_j\left[\hat{J}=j\right]\leq \frac{3}{d}+11\sqrt{\frac{Mb}{d}}.
		\end{equation}}
	\end{thm}
	{To complete the proof of Corollary \ref{cor:Shamir_sub}, note that if $d\geq 21$ and $n\leq \frac{9}{14}d\log d$, then $\frac{1}{9\log d} \leq \min\left\lbrace \frac{1}{27},\frac{d}{14n} \right\rbrace $. 
	Thus, Shamir's Theorem 6 with $\tau=\frac{2\rho}{d}\leq \frac{2}{9d\log d}$ proves the corollary. }
	
\qed

\section{Simulation parameter settings} \label{sec:th_setting}
For simplicity of the proofs we did not fully optimize the choices of $\Meff$ and thresholds. We outline below the choices used for our simulations in Section \ref{sec:sim}. In terms of setup message length, in all simulations $L$ is represented by $\log L$ bits and $\htm$ is represented with $U=2$ bits before the binary dot and $P=3$ bits after the binary dot.

\paragraph{Top-$L$ algorithm.}
We define the following random variables that represent bounds on the number of votes that a support coordinate $k\in\indset$ receives $Y^{top}_s\left( d,r,K,L\right)=Bin\left(\Meff,p^{top}_s \right)  $ and on the number of votes that a non-support coordinate $j\notin\indset$ receives $Y^{top}_n\left( d,r,K,L\right)=Bin\left(\Meff,p^{top}_n \right)  $, where $p^{top}_s=p_s\left( d,r,K,L\right) $ is the probability that $k\in\indset$ is sent by machine $i$, defined in Eq. \eqref{eq:p_k_1}, and $p^{top}_n=p_n\left( d,r,K,L\right) $ is the probability that $j\notin\indset$ is sent by machine $i$, defined in Eq. \eqref{eq:pj_topL}. 

With high probability $Y^{th}_n$ does not deviate from its expectation by more than a $\frac{\log (d-K)}{\log \log (d-K)} $ multiplicative bound.
Thus, we set the number of contacted machines as \[\Meff=\max\left\lbrace \left\lceil \frac{1}{p^{top}_{s}\left(d,r,K,L\right)}\cdot \frac{\log(d-K)}{\log\log(d-K)}\right\rceil ,1\right\rbrace .\]
Intuitively, this selection ensures that the expected number of votes for a fixed support index is equal to the maximal expected number of votes for any non-support index.

In all of our simulations $\frac{\log (d-K)}{\log \log (d-K)}\cdot \E Y^{top}_n  <1$. Hence, the sufficient SNR bound for the Top-$L$ algorithm (vertical blue/red line) is the minimal $r$ for which $\E  Y^{top}_s    \geq 2$, i.e., the support indices have at least 2 votes in expectation while the non-support indices have at most 1.

\paragraph{Thresholding algorithm.}
Similarly to the calculation for the Top-$L$ algorithm, we define $Y^{th}_s\left( d,r,K,\tm\right)=Bin\left(\Meff,p^{th}_s \right)$ and $Y^{th}_n\left( d,r,K,\tm\right)=Bin\left(\Meff,p^{th}_n \right)$ as the number of votes for a support coordinate and non-support coordinate respectively, where $p^{th}_s$ is by Eq. \eqref{eq:pmin} and $p^{th}_n$ is by Eq. \eqref{eq:pj_th}. 

For variant A, given $r$ and $M$, we set the number of contacted machines $\Meff=M$ and the threshold $\tm $ as the highest $t$ s.t.
\begin{equation}\label{eq:emp_th}
\Pr\left[Y^{th}_s\left( d,r,K,t\right)<\E Y^{th}_{n}\left( d,r,K,t\right)\frac{\log\left(d-K\right)}{\log\log\left(d-K\right)}\right]<\frac 1 d.
\end{equation}
Intuitively, Eq. \eqref{eq:emp_th} requires that the probability that the number of votes for a fixed support index is higher than the maximal expected number of votes for any non-support index is lower than $d^{-1}$.

In variant B, the parameters $\tm$ and $\Meff$ are set in the following manner. If for $\Meff=M$ the threshold $t<\sqrt{2\log \frac{d-K}{K}}$, then $\tm=t$ as in variant A. Otherwise, we set $\tm=\sqrt{2\log \frac{d-K}{K}}$ and take the lowest $\Meff$ for which Eq. \eqref{eq:emp_th} with $t=\tm$ holds.

Let $r_{\min}$ and $t_{\min}$ denote the minimal $r$ value and the corresponding $t$ value for which Eq. \eqref{eq:emp_th} holds, respectively.
$r_{\min}$ is the sufficient SNR bound for the thresholding algorithm (vertical orange line).
Note that when $r<r_{\min}$, there is no value of $t$ for which this Eq. \eqref{eq:emp_th} holds.
For completeness of the simulations, in this case we set $t=t_{\min}$.

\bibliography{sparse_bib}
\bibliographystyle{plainnat}

\end{document}